\pdfoutput=1
\documentclass[letterpaper, 10 pt, conference]{ieeeconf}  

\IEEEoverridecommandlockouts                              

\usepackage{cite}
\usepackage{amsmath,amssymb,amsfonts}

\usepackage{amsthm}
\usepackage{algorithm}
\usepackage{algorithmic}
\usepackage{graphicx}
\usepackage{textcomp}
\usepackage{xcolor}
\newtheorem{theorem}{Theorem}
\newtheorem{lemma}{Lemma}

\usepackage{subcaption}
\usepackage{xurl}

\theoremstyle{remark}

\theoremstyle{definition}

\def\BibTeX{{\rm B\kern-.05em{\sc i\kern-.025em b}\kern-.08em
    T\kern-.1667em\lower.7ex\hbox{E}\kern-.125emX}}

\begin{document}

\title{Guiding Vector Field Generation via Score-based Diffusion Model}

\author{Zirui Chen\textsuperscript{1,2}, Shiliang Guo\textsuperscript{2} and Shiyu Zhao\textsuperscript{2}
    \thanks{Acknowledgments: This work was supported by the Brain Science and Brain-like Intelligence Technology — National Science and Technology Major Project (Grant No. 2022ZD0208800), Corresponding author: Shiyu Zhao.}
    \thanks{\textsuperscript{1}College of Computer Science and Technology, Zhejiang University, Hangzhou, China. \textsuperscript{2}WINDY Lab, Department of Artificial Intelligence, Westlake University, Hangzhou, China. \{chenzirui, guoshiliang, zhaoshiyu\}@westlake.edu.cn.}
    }

\maketitle

\begin{abstract}
Guiding Vector Fields (GVFs) are a powerful tool for robotic path following. However, classical methods assume smooth, ordered curves and fail when paths are unordered, multi-branch, or generated by probabilistic models. We propose a unified framework, termed the Score-Induced Guiding Vector Field (SGVF), which leverages score-based generative modeling to construct vector fields directly from data distributions. SGVF learns tangent fields from point clouds with unit-norm, orthogonality, and directional-consistency losses, ensuring geometric fidelity and control feasibility. This approach removes the reliance on ad-hoc path segmentation and enables guidance along complex topologies such as branching and pseudo-manifolds. The study establishes a correspondence between score vanishing in diffusion models and GVF singularities and highlights representational capacity near sharp path curvatures. Experiments on robotic navigation in planar environments demonstrate that SGVF achieves reliable path following in scenarios where classical GVFs fail, underscoring its potential as a bridge between generative modeling and geometric control. Code and experiment video are available at \url{https://github.com/czr-gif/Guiding-Vector-Field-Generation-via-Score-based-Diffusion-Model}.
\end{abstract}

\section{Introduction}
Guiding Vector Field (GVF) methods have emerged as a powerful framework in robotic guidance and control. By establishing a connection between task-level goals and motion-level control, GVFs allow for a geometric interpretation of control behavior. In particular, GVFs have been extensively applied to path following problems \cite{chen2025non,Hu_2023_05,Rezende_2022_04,zhou2025inverse,singh2024state}, where a desired path is converted into a vector field that maps the geometry of the path to the desired velocity of the agent.

Traditionally, the paths to be followed are specified either by analytical expressions or as discrete data sets. For analytically-defined paths, the topology of the curve plays a crucial role. For instance, the homology of a path—whether it is line-like or circle-like—determines whether the resulting GVF will contain singularities or not, as studied in \cite{yao2022guiding}. On the other hand, when the path is given as a data sequence, interpolation techniques are employed to generate a smooth, parameterized curve. Common choices include Lagrange interpolation \cite{Goncalves_2010} and Fourier-based interpolation methods \cite{Chen_2025_01}.

Interestingly, certain topological transformations can map a closed path (e.g., a circle) to an open, line-like structure in a higher-dimensional space. This allows the path to be treated as a parameterized trajectory \cite{Yao_2021Singularity}, enabling control schemes that interpret path following as a continuous pursuit process—where the robot ``chases” a moving reference point along the curve. This perspective builds a conceptual bridge between classical path following and trajectory tracking.

However, such parameterizations rely on the sequential structure of path points. When the way points are either unordered or contain multiple branches (see Fig.\ref{fig:index}) —as is common in generative scenarios like diffusion-based point cloud generation \cite{luo2021diffusion}—this structure breaks down. The reference point effectively ``jumps" between discontinuous locations, as illustrated in Fig. \ref{fig:index}, destroying the geometric continuity required by traditional GVF . Moreover, when the path itself forms a piecewise-manifold (such as a square or a triangle), constructing a unified guiding vector field becomes impossible, since the vector field heavily depends on the path’s differential.

\begin{figure}
    \centering
    \includegraphics[width=1\linewidth]{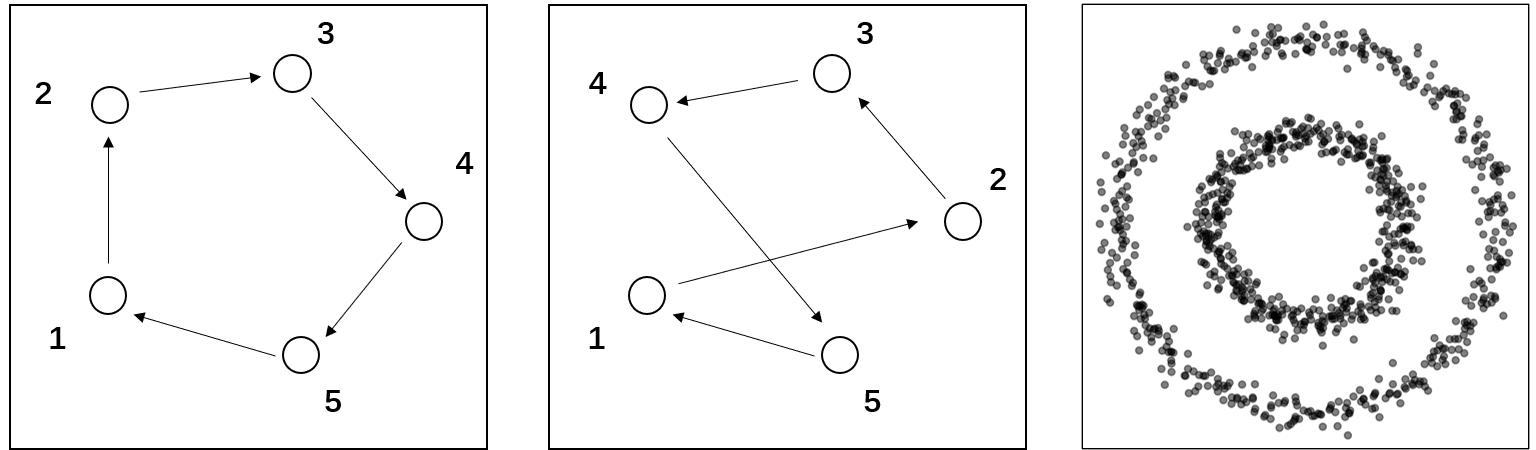}
    \caption{Path indexing illustration: left – orderly indexing preserves the waypoint shape; middle – shuffled points break the interpolation result; right – random points with branching waypoints cannot be sequentially indexed.}
    \label{fig:index}
\end{figure}

Nevertheless, such paths hold significant potential for future robotic path planning. The emergence of powerful generative models —such as diffusion-based models— has opened new directions in robot planning and control. For instance, diffusion policies and generative trajectory planners (e.g., \cite{xiao2023safediffuser, Han_2025_06, 11027664}) have shown remarkable success in high-level task planning. However, these models typically operate on distributions rather than parameterized curves, creating a mismatch between classical GVF design and modern data-driven robotics.

Classical GVF methods typically assume that the target path is given as a smooth, ordered, and parameterized curve \cite{Sujit_2014Unmanned}. However, such assumptions often break down in modern robotics applications, where paths are increasingly generated by probabilistic or generative models in the form of unordered or sparsely sampled point clouds \cite{luo2021diffusion, xiao2023safediffuser, Han_2025_06}. These data-driven representations lack explicit sequential structure, making it difficult for conventional GVF designs to preserve geometric continuity or generate feasible control actions.

It is interesting to note that a natural connection emerges between diffusion-based generative models and guiding vector fields: diffusion models describe how data points drift from a noise distribution toward a target distribution, with each step providing a drifting term that directs the data toward higher-probability regions. In score-based generative models, this drifting direction is formally given by the score function, i.e., the gradient of the log-probability density \cite{song2020score}. When path points are interpreted as samples from a target distribution, the score function naturally defines a vector field over space that follows the underlying data manifold. In this sense, the score function directly corresponds to the gradient component of a guiding vector field, offering a principled bridge between probabilistic generative modeling and geometric path guidance.

Motivated by this observation, we reinterpret path following as a distributional guidance problem, where vector fields are constructed directly on the support of the data distribution. Leveraging this perspective, we introduce the \textbf{Score-Induced Guiding Vector Field (SGVF)}, a novel framework that integrates generative modeling with vector field design. SGVF retains the structure-awareness of classical control while extending its applicability to unstructured, multi-modal, and probabilistic path representations in a unified and robust manner.

This paper makes three key contributions: 1) A unified framework, termed SGVF, that enables robots to follow complex paths---including unordered, branching, and probabilistic paths---by circumventing the reliance on ad-hoc segmentation; 2) an approach that infers intrinsic manifold geometry directly from raw point clouds, utilizing learned score fields to approximate tangent directions and encode control-relevant information; and 3) a theoretical correspondence between score vanishing in diffusion models and singularities in GVFs, offering new insights into topological robustness and strategies to mitigate degeneracies.

\section{Problem Formulation}

\subsection{Path Following on Discrete Waypoints}
In this work, we consider the \emph{path following problem}, where a robot must converge to and move along a desired path in space without adhering to a prescribed timing law. Specifically, this path is represented by a set of discrete points, termed point clouds, whose samples are denoted as $p_{\text{data}}$.

This general formulation encompasses classical paths as a special case. In the classical setting, a path is often defined implicitly as the zero level set of a smooth function $\phi:\mathbb{R}^d \to \mathbb{R}$, i.e.,
\[
\mathcal{P} = \{ x \in \mathbb{R}^d \mid \phi(x) = 0 \},
\]
and a guiding vector field (GVF) $\mathbf{u}(x) \in \mathbb{R}^d$ is constructed from explicit tangent and normal directions, e.g.,
\[
\mathbf{u}(x) = \mathbf{T}(x) - k_n \nabla \phi(x)\, \phi(x),
\]
with $\mathbf{T}(x)$ a tangent vector, $\nabla \phi(x)$ a gradient vector, and $k_n>0$ a gain. Such designs are effective for smooth curves but become unreliable when the path consists of unordered, discrete, or multi-branched points, where differential structures are undefined.

\subsection{Score-based Diffusion Formulation}
To address the limitations of classical explicit definitions, we adopt a data-driven formulation based on \emph{Score-based Diffusion Models}. We treat the path waypoints as samples drawn from an underlying data distribution $p_{\text{data}}(x)$. The diffusion model defines a forward stochastic process that gradually perturbs the data into noise via a Stochastic Differential Equation (SDE):
$$
dx = \sqrt{\frac{d[\sigma^2(t)]}{dt}} d\mathbf{w}, \quad t \in [0,1],
$$
where $\mathbf{w}$ is the standard Wiener process and $\sigma(t)$ is a noise schedule. This process yields a time-dependent marginal distribution $p_t(x)$. The geometry of the data manifold is encoded in the \emph{score function} $\mathbf{S}(x, t) \in \mathbb{R}^d$, defined as the gradient of the log-density:
$$
\mathbf{S}(x, t) = \nabla_{x} \log p_t(x).
$$
Intuitively, $\mathbf{S}(x, t)$ defines a vector field pointing towards high-density regions (i.e., the path). 

From a statistical perspective, the score function satisfies \emph{Stein’s Identity} \cite{Stein1972,Chen2011}. For any smooth function vector field $\mathbf{f}(x)$ with compact support, the following holds under distribution $p(x)$:
$$
\mathbb{E}_{p(x)}[\mathbf{S}(x)\, \mathbf{f}(x)^T] = - \mathbb{E}_{p(x)}[\nabla_{x} \mathbf{f}(x)].
$$
By substituting $\mathbf{f}(x) = \log p(x)$, this yields a natural orthogonality property in expectation:
$$
\mathbb{E}_{p(x)}[\mathbf{S}(x)\, \log p(x)] = 0.
$$
This identity underpins modern estimation methods like score matching \cite{Hyvarinen2005} and reveals the score's connection to Fisher information geometry \cite{Amari2000}. This expected orthogonality highlights the score as a distribution-driven descriptor that inherently captures geometric structure, serving as the theoretical foundation for our network training without relying on explicit parameterization.

\section{methodology}
This section describes the construction and training process of the SGVF. The SGVF for path following is constructed from two complementary components: a gradient component that attracts the robot toward the path and a tangent component that induces motion along the path. Formally, for a given waypoints set $\{x_i\}_{i=1}^N$, the SGVF $\mathbf{m}(x) \in \mathbb{R}^d$ at a point $x$ is expressed as
\[
\mathbf{m}(x) = \mathbf{u}_\text{grad}(x) + \mathbf{u}_\text{tan}(x),
\]
where $\mathbf{u}_\text{grad}(x)\in \mathbb{R}^d$ represents the gradient component and $\mathbf{u}_\text{tan}(x)\in \mathbb{R}^d$ represents the tangent component. 

In our approach, the gradient component is computed using the \emph{probability gradient} (i.e., the score) learned from the observed waypoints:
\[
\mathbf{u}_\text{grad}(x) = \mathbf{s}_\theta(x, t=1) \approx \nabla_x \log p(x \mid \{x_i\}),
\]
where $\mathbf{s}_\theta(x,t) \in \mathbb{R}^d$ denotes a transformed score vector that preserves the original score direction while being suitable for constructing the guiding vector field, and $p(x \mid \{x_i\})$ is the implicit distribution defined by the waypoints $\{x_i\}$. This component captures the local attraction toward regions of high probability along the path. The tangent component is obtained heuristically on top of the learned score, either through training with trajectory-following objectives or other guidance heuristics, to induce progression along the path:
\[
\mathbf{u}_\text{tan}(x) = \mathbf{v}_\phi(x),
\]
where $\mathbf{v}_\phi(x)\in \mathbb{R}^d$ is learned from a neural network. By combining these two components, the resulting vector field $\mathbf{m}(x)$ can drive the robot to follow the path defined by the discrete waypoints, handling both attraction to the path and forward motion along it.

\subsection{Score Field Learning}
The objective is to learn the gradient field of the data distribution, $\nabla_x \log p_{\text{data}}(x)$, to guide the robot. Direct estimation of this quantity is intractable. However, by adopting the Denoising Score Matching framework \cite{song2020score}, we can bypass density estimation and train the network using a tractable \emph{conditional score field} \cite{flowsanddiffusions2025}.

The training procedure is detailed in \textbf{Algorithm \ref{alg:score_conditional_noisy}}. Instead of relying on ordered sequences, we treat the path as a collection of independent waypoints. In each iteration, we sample a clean waypoint $z$ and perturb it with Gaussian noise $\sigma(t)\epsilon$ to obtain a noisy sample $x$. Crucially, for Gaussian perturbation kernels, the \emph{true conditional score} $u_r = \nabla_x \log p(x | z)$ has a closed-form solution:
\begin{equation}
    u_r = \frac{-(x - z)}{\sigma^2(t)} = -\frac{\epsilon}{\sigma(t)}.
\end{equation}
The network $\mathbf{S}_\theta(x,t)$ is trained to regress this target $u_r$. This formulation ensures that the learned field points effectively towards the data manifold (the path) from any noisy state, enabling robust guidance even for unordered or branching structures without explicit topological priors.

\begin{algorithm}[t]
\small 
\caption{Training Score Field via Denoising Score Matching}
\label{alg:score_conditional_noisy}
\begin{algorithmic}[1]
\REQUIRE Waypoints $\{x_i\}_{i=1}^N$, Noise schedule $\sigma(t)$, Network $\mathbf{S}_\theta$
\ENSURE Trained score field $\mathbf{S}_\theta(x,t)$

\FOR{each training iteration}
    \STATE Sample batch of waypoints $z \sim \{x_i\}_{i=1}^N$
    \STATE Sample time $t \sim \mathcal{U}(0,1)$ and noise $\epsilon \sim \mathcal{N}(0, \mathbf{I})$
    \STATE \textbf{Perturb:} $x = z + \sigma(t)\epsilon$ \quad \textit{// Forward Diffusion}
    \STATE \textbf{Target:} $u_r = -\epsilon / \sigma(t)$ \quad \textit{// Analytical Score Formula}
    \STATE Compute Loss: $\ell = \frac{1}{B} \sum \| \mathbf{S}_\theta(x,t) - u_r \|^2$
    \STATE Update $\theta$ via backpropagation to minimize $\ell$
\ENDFOR

\RETURN Trained score field $\mathbf{S}_\theta(x,t)$
\end{algorithmic}
\end{algorithm}

\subsection{Learning Orthogonal Tangent Fields}

We now consider training an auxiliary tangent vector field that, together with the score field, forms a composite vector field over the data manifold. The goal is to construct a tangent field that is geometrically meaningful, directionally smooth, and orthogonal to the score field almost everywhere. We achieve this by minimizing a combination of three losses, described below.

We denote by $\mathbf{S}_\theta(x,t)$ the raw score predicted by the network, 
and by $\mathbf{s}_\theta(x,t)$ its normalized version used for constructing the tangent component of the guiding vector field. 
The normalization is defined as
\begin{equation}
    \mathbf{s}_\theta(x,t) 
    = \tanh\!\big(k_s\|\mathbf{S}_\theta(x,t)\|\big)\,
      \frac{\mathbf{S}_\theta(x,t)}{\|\mathbf{S}_\theta(x,t)\|}.
\end{equation}
This transformation preserves the directional information of the original score while bounding its magnitude, which is crucial for stable vector field generation. 
Although the transformed score no longer strictly satisfies Stein's identity or zero-mean property, its direction remains consistent with the original $\mathbf{S}_\theta(x,t)$, allowing us to leverage the statistical information encoded in the raw score to guide the robot along the path.

Let $\mathbf{s}_\theta(x, t)$ be the learned score field and $\mathbf{v}_\phi(x, t)$ be the learnable tangent field. We define a mixed field as their linear combination:
\[
\mathbf{m}(x) = \mathbf{s}_\theta(x, t=1) + \mathbf{v}_\phi(x).
\]
To train $\mathbf{v}_\phi$, we define the following composite loss:
\[
\mathcal{L}_{\text{total}} = \lambda_1 \mathcal{L}_{\text{unit}} + \lambda_2 \mathcal{L}_{\text{orth}} + \lambda_3 \mathcal{L}_{\text{dir}},
\]
where each term encourages a different geometric property of the tangent field.

\paragraph{Unit-Length Constraint ($\mathcal{L}_{\text{unit}}$)}

We enforce that the composite vector $\mathbf{m}(x, t)$ lies on the unit sphere:
\[
\mathcal{L}_{\text{unit}} = \mathbb{E}_{x} \left[ \left( \left\| \mathbf{s}_\theta(x, 1) + \mathbf{v}_\phi(x) \right\|_2 - 1 \right)^2 \right].
\]
This loss serves two purposes: (i) it normalizes the overall guidance field and (ii) it implicitly balances the scale between score and tangent components.

\paragraph{Orthogonality Constraint ($\mathcal{L}_{\text{orth}}$)}

To promote the tangent field as locally orthogonal to the score, we minimize the cosine similarity between the two:
\[
\mathcal{L}_{\text{orth}} = \mathbb{E}_{x} \left[ \left\langle \frac{\mathbf{s}_\theta(x, 1)}{\|\mathbf{s}_\theta(x, 1)\|}, \frac{\mathbf{v}_\phi(x)}{\|\mathbf{v}_\phi(x)\|} \right\rangle^2 \right].
\]
This softly enforces orthogonality while preserving differentiability.

\paragraph{Directional Consistency Loss ($\mathcal{L}_{\text{dir}}$)}

Finally, to ensure that the mixed vector field does not oscillate arbitrarily, we introduce a smoothness term based on local directional consistency. For each point $x$, we compute cosine similarity with its $k$ neighbors (computed by adding noise on the point $x$) in the tangent field:
\[
\mathcal{L}_{\text{dir}} = \mathbb{E}_{x} \left[ 1 - \frac{1}{k} \sum_{x' \in \mathcal{N}_k(x)} \cos \angle(\mathbf{v}_\phi(x), \mathbf{v}_\phi(x')) \right],
\]
where $\mathcal{N}_k(x)$ denotes the $k$ neighbors of $x$ in Euclidean space. This term encourages local smoothness and reduces directional noise.

During training, we sample points $x = \gamma(z, t)$ from the conditional path and compute all three losses jointly. Gradients are only backpropagated through the tangent field $\mathbf{v}_\phi$, while the score field is kept fixed. The balancing weights $\lambda_1, \lambda_2, \lambda_3$ are selected empirically.

This design ensures that the learned tangent field is geometrically meaningful, smooth, and structurally aligned with the score field, paving the way for advanced applications such as surface-following generative paths or guided manifold traversal.

\begin{algorithm}[t]
\small
\caption{Training Tangent Field}
\label{alg:tangent}
\begin{algorithmic}[1]
\REQUIRE Trained score field $\mathbf{s}_\theta(x,t)$, Tangent network $\mathbf{v}_\phi$
\ENSURE Trained tangent field $\mathbf{v}_\phi(x)$

\FOR{each training iteration}
    \STATE Sample batch $x$ and nearest neighbors $\mathcal{N}_k(x)$
    \STATE \textbf{Inference:} $\mathbf{s} = \mathbf{s}_\theta(x,1)$, $\mathbf{v} = \mathbf{v}_\phi(x)$
    \STATE Form mixed field: $\mathbf{m} = \mathbf{s} + \mathbf{v}$
    \STATE \textbf{Compute Losses:}
    \STATE \quad Unit-length: $\ell_{unit} = (\|\mathbf{m}\|_2 - 1)^2$
    \STATE \quad Orthogonality: $\ell_{orth} = \langle \hat{\mathbf{s}}, \hat{\mathbf{v}} \rangle^2$
    \STATE \quad Consistency: $\ell_{dir} = 1 - \frac{1}{k}\sum_{x'\in \mathcal{N}_k(x)} \cos\angle(\mathbf{v}(x), \mathbf{v}(x'))$
    \STATE Update $\phi$ to minimize $\mathcal{L}_{total} = \lambda_1 \ell_{unit} + \lambda_2 \ell_{orth} + \lambda_3 \ell_{dir}$
\ENDFOR

\RETURN Trained tangent field $\mathbf{v}_\phi(x)$
\end{algorithmic}
\end{algorithm}

\section{Lyapunov Analysis of the Mixed Field}
This section presents a Lyapunov-based analysis of the stability properties of the proposed SGVF. For $x \in \mathbb{R}^n$, we analyze the continuous-time dynamics
\begin{equation}
\dot x = \mathbf{m}(x, t) \;\triangleq\; \mathbf{s}_\theta(x, 1)+\mathbf{v}_\phi(x),
\label{eq:dynamics}
\end{equation}
where $\mathbf{s}_\theta(x,1)=\nabla_x \log P(x)$ is the learned terminal-time score field and $v(x)$ is a learned tangent field trained with unit-norm, orthogonality, and directional-consistency constraints. Let $P:\mathbb{R}^d\to(0,\infty)$ be a smooth density with bounded supremum $P^\star \triangleq \sup_x P(x) < \infty$.
Define the Lyapunov function
\begin{equation}
V(x) \;\triangleq\; -\log\!\big(\tfrac{P(x)}{P^\star}\big)
\;=\; -\log P(x) + \log P^\star \;\ge 0,
\label{eq:lyapunov}
\end{equation}
whose zero-level set coincides with the set of global modes $\mathcal{M} \triangleq \{x:\,P(x)=P^\star\}$.
Note that $\nabla V(x) = -\nabla \log P(x) = -\mathbf{s}_\theta(x,1)$.

To establish that a sufficiently trained SGVF can accomplish the path-following task, we adopt the following assumptions, whose plausibility is justified by the training losses:
\begin{itemize}
\item[(A1)] $P$ is of class $C^1$, and its score $\mathbf{s}_\theta(\cdot,1)$ is locally Lipschitz.
\item[(A2)] After sufficient training, the learned tangent field $\mathbf{v}_\phi(x)$ and the score field $\mathbf{s}_\theta(x,1)$ approximately satisfy the following geometric properties: \\
\hspace{1em}(i) \emph{Orthogonality:} $\mathbf{s}_\theta(x,1)^\top \mathbf{v}_\phi(x) = 0$ for all $x$;\\
\hspace{1em}(ii) \emph{Unit length:} $\| \mathbf{m}(x) \| = \| \mathbf{s}_\theta(x,1) + \mathbf{v}_\phi(x) \| = 1$ for all $x$;\\
\hspace{1em}(iii) \emph{Directional consistency:} there exists a constant $L>0$ such that for any two points $x, y$ lying on the same level set of $P$, the tangent field satisfies
\[
\|\mathbf{v}_\phi(x) - \mathbf{v}_\phi(y)\| \le L \|x - y\|,
\]
ensuring local Lipschitz continuity of $\mathbf{v}_\phi$ along the level sets of $P$ and preventing abrupt changes in the guidance direction.
\item[(A3)] Trajectories remain within a compact sublevel set $\{ x : V(x) \le c \}$.
\end{itemize}

\begin{lemma}[Lyapunov decrease]
\label{lem:decrease}
Along any solution of \eqref{eq:dynamics}, the derivative of $V$ satisfies
\begin{equation}
\begin{aligned}
    \dot V(x) &\;\triangleq\; \nabla V(x)^\top \dot{x} \\
              &= \big(-\mathbf{s}_\theta(x,1)\big)^\top \big(\mathbf{s}_\theta(x,1) + \mathbf{v}_\phi(x)\big) \\
              &= -\|\mathbf{s}_\theta(x,1)\|^2 - \mathbf{s}_\theta(x,1)^\top \mathbf{v}_\phi(x).
\end{aligned}
\end{equation}

Under (A2.i), $\mathbf{s}_\theta(x,1)^\top \mathbf{v}_\phi(x)=0$, hence
\begin{equation}
\dot V(x) \;=\; -\|\mathbf{s}_\theta(x,1)\|^2 \;\le\; 0,
\end{equation}
with equality if and only if $\mathbf{s}_\theta(x,1)=0$.
\end{lemma}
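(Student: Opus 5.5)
The plan is a direct chain-rule computation followed by the orthogonality assumption (A2.i).

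First, by (A1) $P$ is $C^1$ and strictly positive, so $V=-\log P+\log P^\star$ from \eqref{eq:lyapunov} is $C^1$ on $\mathbb{R}^d$. We have $\nabla V(x)=-\nabla P(x)/P(x)=-\nabla_x\log P(x)$, which is identified with $-\mathbf{s}_\theta(x,1)$ by the standing convention stated after \eqref{eq:lyapunov}.

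Second, take a solution $x(\cdot)$ of \eqref{eq:dynamics}. Local existence and uniqueness follow from the local Lipschitz property of the score in (A1) together with (A2.iii). Strictly, one also needs local Lipschitz continuity of $\mathbf{v}_\phi$ transversal to level sets; I would note this as implicit in the network's smoothness. The chain rule then gives
\[
\frac{d}{dt}V(x(t))=\nabla V(x)^\top\dot x=\big(-\mathbf{s}_\theta(x,1)\big)^\top\big(\mathbf{s}_\theta(x,1)+\mathbf{v}_\phi(x)\big).
\]
Expanding the inner product yields $-\|\mathbf{s}_\theta(x,1)\|^2-\mathbf{s}_\theta(x,1)^\top\mathbf{v}_\phi(x)$, which is the first displayed identity.

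Third, substitute (A2.i): the cross term vanishes identically, so $\dot V=-\|\mathbf{s}_\theta(x,1)\|^2\le 0$. For the equality claim, a squared Euclidean norm is zero exactly when the vector is zero, so $\dot V(x)=0$ if and only if $\mathbf{s}_\theta(x,1)=0$. Equivalently, $\nabla P(x)=0$, because $P>0$.

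The only real obstacle is interpretive rather than technical. The lemma uses (A2.i) as an exact identity, whereas training only enforces it approximately. If time permits, I would add a remark on the approximate case. Suppose $|\mathbf{s}^\top\mathbf{v}|\le\delta\|\mathbf{s}\|$. From (A2.ii) and orthogonality, $\|\mathbf{v}\|\le 1$, so a bound of this form is natural. Then $\dot V\le-\|\mathbf{s}\|(\|\mathbf{s}\|-\delta)$, so $V$ still decreases outside the set $\{\|\mathbf{s}\|\le\delta\}$. The exact statement itself needs nothing beyond the computation above.
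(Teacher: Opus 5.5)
Your proof is correct and matches the paper's: the chain rule with $\nabla V=-\mathbf{s}_\theta(x,1)$ gives the expanded identity, (A2.i) removes the cross term, and a squared norm vanishes exactly when the vector does. The existence and approximate-orthogonality remarks are not needed for the lemma, but your bound $\dot V\le-\|\mathbf{s}_\theta\|(\|\mathbf{s}_\theta\|-\delta)$ is the correct one, whereas the paper's later display $\dot V\le-(1-\varepsilon\|\mathbf{v}_\phi(x)\|)\|\mathbf{s}_\theta(x,1)\|^2$ implicitly requires $\|\mathbf{s}_\theta(x,1)\|\ge 1$, which never holds under the $\tanh$ normalization.
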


\begin{proof}
The identity for $\dot V$ follows from $\nabla V=-\mathbf{s}_\theta(x,1)$ and definition of $m$.
Assumption (A2.i) yields orthogonality, hence the cross term vanishes and $\dot V=-\|\mathbf{s}_\theta\|^2\le 0$, with equality iff $\mathbf{s}=0$.
\end{proof}

\begin{theorem}
\label{thm:main}
Let assumptions (A1)--(A3) hold. Then:
\begin{itemize}
\item[(i)] $V$ is a Lyapunov function for the dynamics \eqref{eq:dynamics}.
All trajectories converge to the largest invariant set contained in
\[
\mathcal{E} \;\triangleq\; \{x:\ \dot V(x)=0\}
\;=\; \{x:\ \mathbf{s}_\theta(x,1)=0\}
\;\supseteq\; \mathcal{M}.
\]
\item[(ii)] On any level set of $V$ (in particular on $V^{-1}(0)=\mathcal{M}$), the vector field $\mathbf{v}_\phi$ is tangent:
$\mathbf{s}_\theta(x,1)^\top \mathbf{v}_\phi(x)=0$ implies $\nabla V(x)^\top \mathbf{v}_\phi(x)=0$.
Hence the set $V^{-1}(c)$ is forward-invariant for the flow of $\mathbf{v}_\phi$; in particular, $V(x(t))$ remains constant along trajectories confined to a level set.
\item[(iii)] On $\mathcal{M}=V^{-1}(0)$, the dynamics is \emph{non-stationary} in general:
since $\mathbf{s}_\theta(x,1)=0$ on $\mathcal{M}$, we have $\mathbf{m}(x)=\mathbf{v}_\phi(x)$.
Under (A2.ii) the mixed field has unit norm, $\|\mathbf{m}(x)\|=\|\mathbf{v}_\phi(x)\|=1$, so the motion on $\mathcal{M}$ persists with nonzero speed while preserving $V(x)\equiv 0$.
\end{itemize}
\end{theorem}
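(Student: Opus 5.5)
The plan is to obtain the three parts in order from Lemma~\ref{lem:decrease} together with LaSalle's invariance principle, using (A1)--(A3) only where they are needed.

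\textbf{Part (i).} First I will check that the flow is well posed. By (A1), $\mathbf{s}_\theta(\cdot,1)$ is locally Lipschitz. For $\mathbf{v}_\phi$ I will take (A2.iii) as giving local Lipschitz continuity, treating $\mathbf{v}_\phi$ as a network output that is Lipschitz on compacts. Then $\mathbf{m}$ is locally Lipschitz, so solutions of \eqref{eq:dynamics} exist and are unique, and by (A3) they are defined for all $t\ge 0$ inside the compact set $\Omega_c=\{V\le c\}$.

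Next, $V\ge 0$ and $V$ is $C^1$, since $P$ is $C^1$ and positive. Lemma~\ref{lem:decrease} gives $\dot V=-\|\mathbf{s}_\theta\|^2\le 0$ on $\Omega_c$, so $V$ is a (weak) Lyapunov function. The set $\Omega_c$ is compact and positively invariant by (A3). LaSalle's invariance principle then says every trajectory converges to the largest invariant subset of $\mathcal{E}=\{x\in\Omega_c:\dot V(x)=0\}$. By the equality case of Lemma~\ref{lem:decrease}, $\mathcal{E}=\{\mathbf{s}_\theta=0\}$.

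For the inclusion $\mathcal{M}\subseteq\mathcal{E}$: at a global maximizer of the $C^1$ function $P$ we have $\nabla P=0$. Since $P>0$, this gives $\nabla\log P=\nabla P/P=0$.

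\textbf{Part (ii).} This is a pointwise computation: $\nabla V^\top\mathbf{v}_\phi=-\mathbf{s}_\theta^\top\mathbf{v}_\phi=0$ by (A2.i). So along the flow $\dot y=\mathbf{v}_\phi(y)$ we get $\tfrac{d}{dt}V(y(t))=0$, hence $V(y(t))=V(y(0))$ and each level set $V^{-1}(c)$ is forward invariant. Existence of this flow again uses local Lipschitz continuity of $\mathbf{v}_\phi$, here along level sets as in (A2.iii), or on a neighborhood. Compactness of $\Omega_c$ prevents finite-time escape. The same computation shows that any trajectory of $\mathbf{m}$ confined to a level set keeps $V$ constant.

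\textbf{Part (iii).} On $\mathcal{M}$ we have $\mathbf{s}_\theta=0$ by part (i), so $\mathbf{m}=\mathbf{v}_\phi$ there. Then (A2.ii) gives $\|\mathbf{v}_\phi\|=\|\mathbf{m}\|=1$, so no point of $\mathcal{M}$ is an equilibrium. Invariance of $\mathcal{M}$ under $\mathbf{m}$ follows because $\mathbf{m}$ restricted there is $\mathbf{v}_\phi$, and part (ii) shows $\mathbf{v}_\phi$ preserves $V=0$.

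\textbf{Main obstacle.} I expect the main difficulty to be the consistency of (A2.i) and (A2.ii) with the critical set, together with the regularity needed for the flows. Where $\mathbf{s}_\theta\ne 0$, unit norm forces $\|\mathbf{v}_\phi\|^2=1-\|\mathbf{s}_\theta\|^2$, which requires $\|\mathbf{s}_\theta\|\le 1$. I will state this explicitly as implied by (A2); the normalization in the transformed score already supplies the bound. The invariance claim in (ii) also needs uniqueness for the $\mathbf{v}_\phi$-flow, and for that I will rely on the Lipschitz reading of (A2.iii). The remaining steps are direct.
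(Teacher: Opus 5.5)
Your proposal is correct and follows essentially the same route as the paper. Part (i) uses Lemma~\ref{lem:decrease} plus LaSalle with (A3), part (ii) uses the pointwise orthogonality computation $\nabla V^\top \mathbf{v}_\phi = -\mathbf{s}_\theta^\top \mathbf{v}_\phi = 0$, and part (iii) uses first-order optimality at global modes together with (A2.ii); your extra well-posedness remarks (local Lipschitz continuity of $\mathbf{v}_\phi$, which LaSalle needs and the paper leaves implicit) are a refinement rather than a departure.
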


\begin{figure*}[htbp]
    \centering
    \begin{subfigure}[b]{0.24\linewidth}
        \centering
        \includegraphics[width=\linewidth]{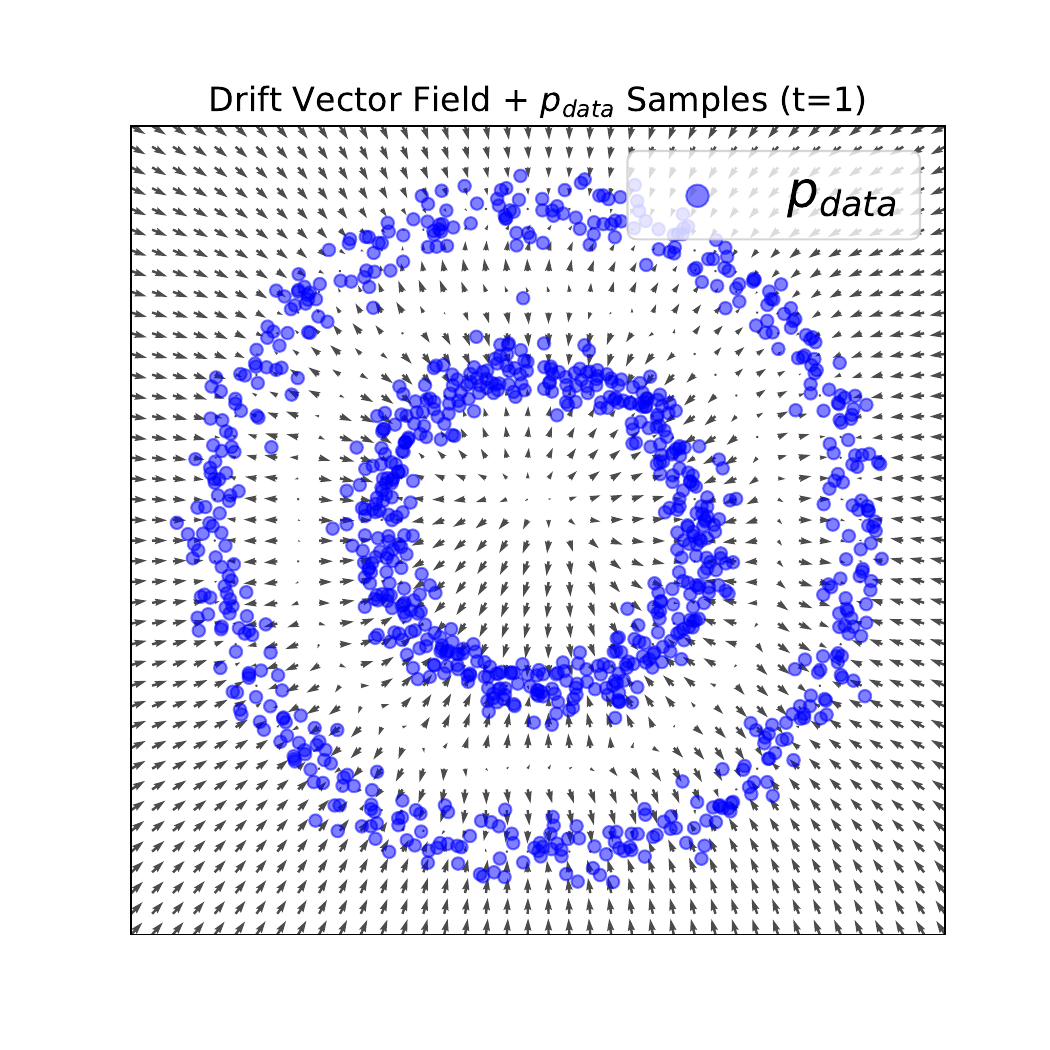}
        \caption{Learned score field for concentric circles path}
        \label{fig:doublescore}
    \end{subfigure}
    \hfill
    \begin{subfigure}[b]{0.24\linewidth}
        \centering
        \includegraphics[width=\linewidth]{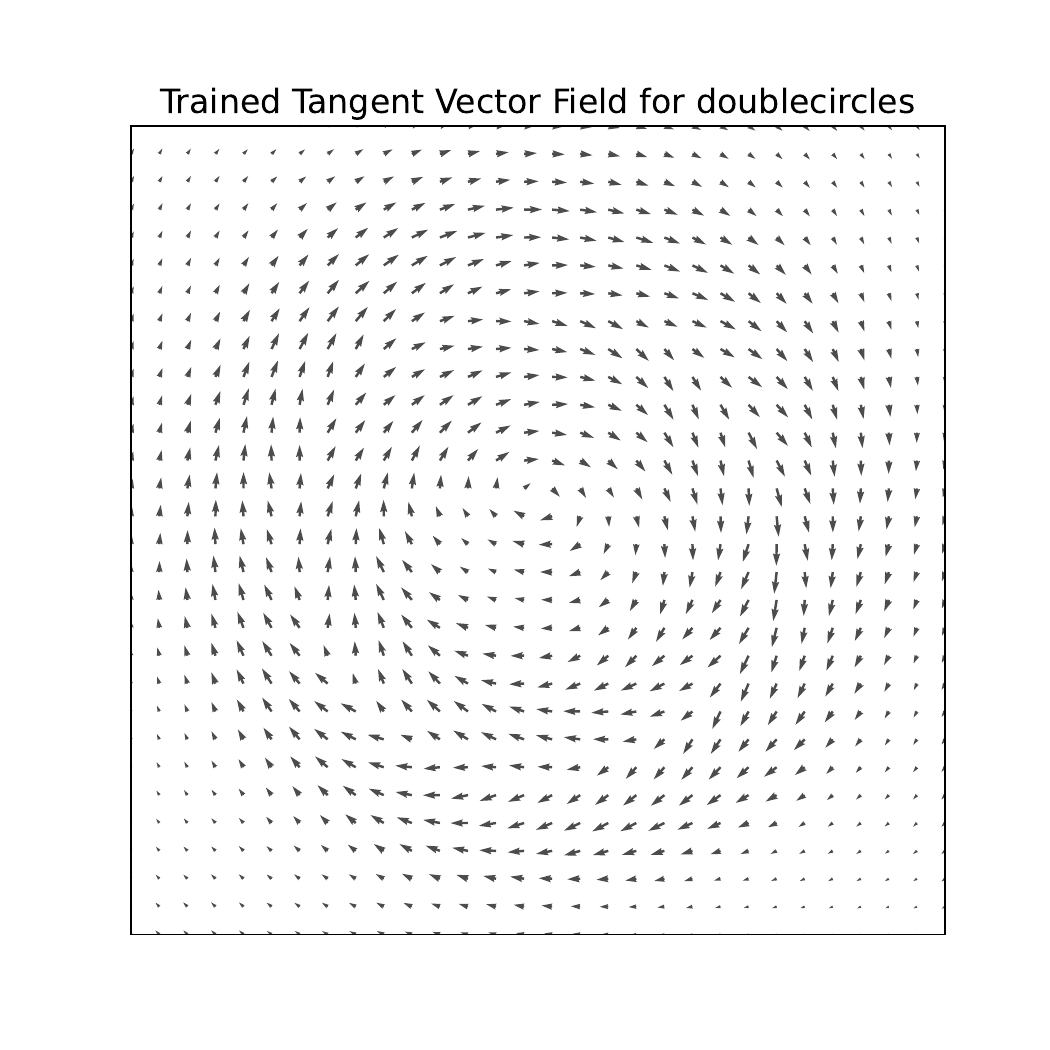}
        \caption{Learned tangent field for concentric circles path}
        \label{fig:tangent}
    \end{subfigure}
    \hfill
    \begin{subfigure}[b]{0.24\linewidth}
        \centering
        \includegraphics[width=\linewidth]{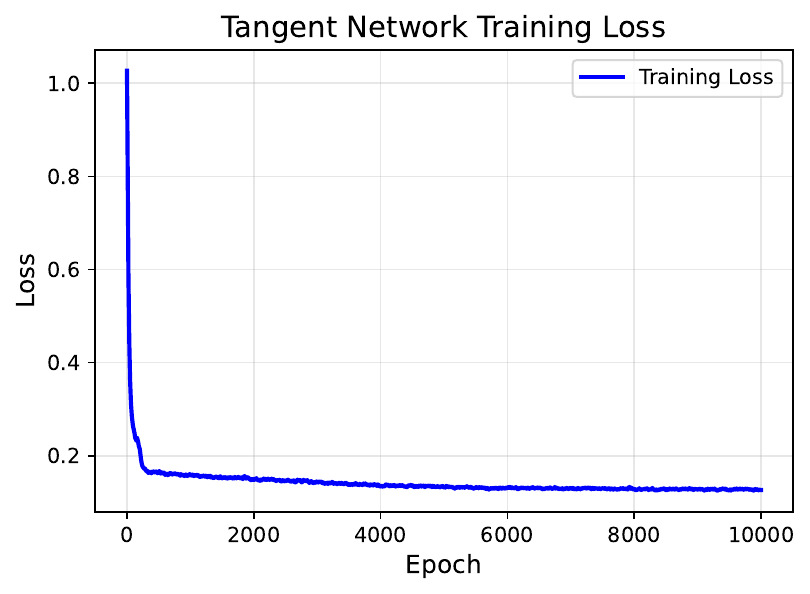}
        \caption{Tangent network training loss for concentric circles path}
        \label{fig:loss}
    \end{subfigure}
    \hfill
    \begin{subfigure}[b]{0.24\linewidth}
        \centering
        \includegraphics[width=\linewidth]{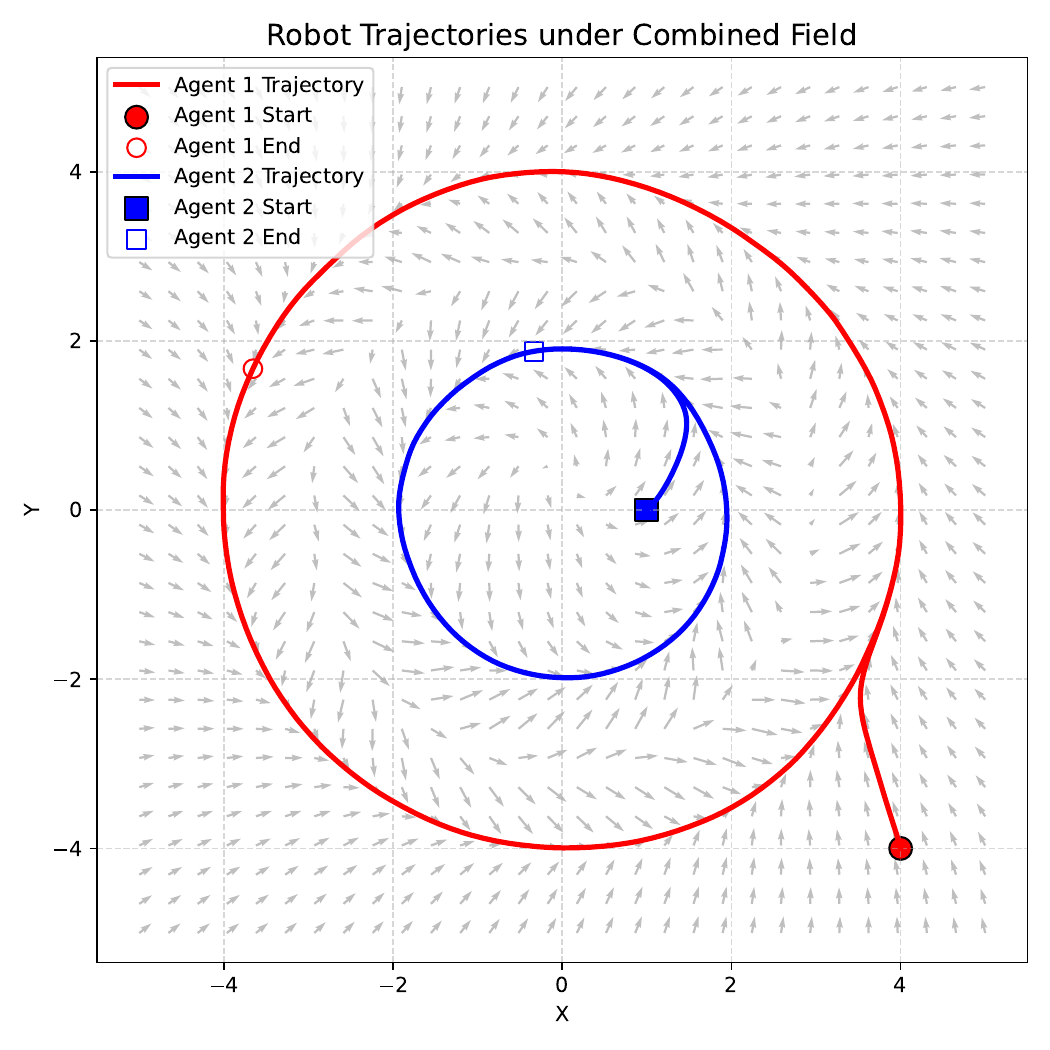}
        \caption{Path-following trajectories for concentric circles path}
        \label{fig:pathfollowing}
    \end{subfigure}

    \vspace{0.3cm} 
    
    \begin{subfigure}[b]{0.24\linewidth}
        \centering
        \includegraphics[width=\linewidth]{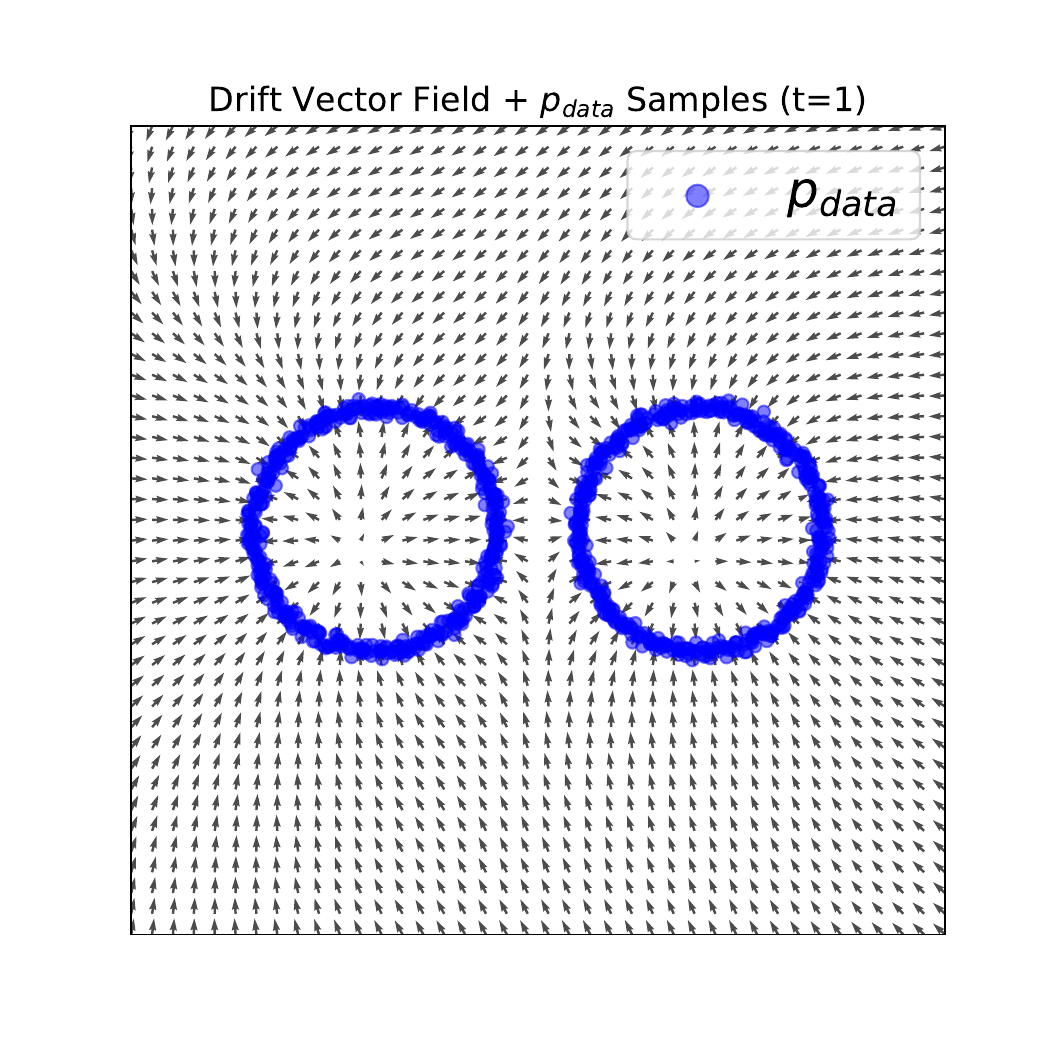}
        \caption{Learned score field for separately circles path}
        \label{fig:sscore}
    \end{subfigure}
    \hfill
    \begin{subfigure}[b]{0.24\linewidth}
        \centering
        \includegraphics[width=\linewidth]{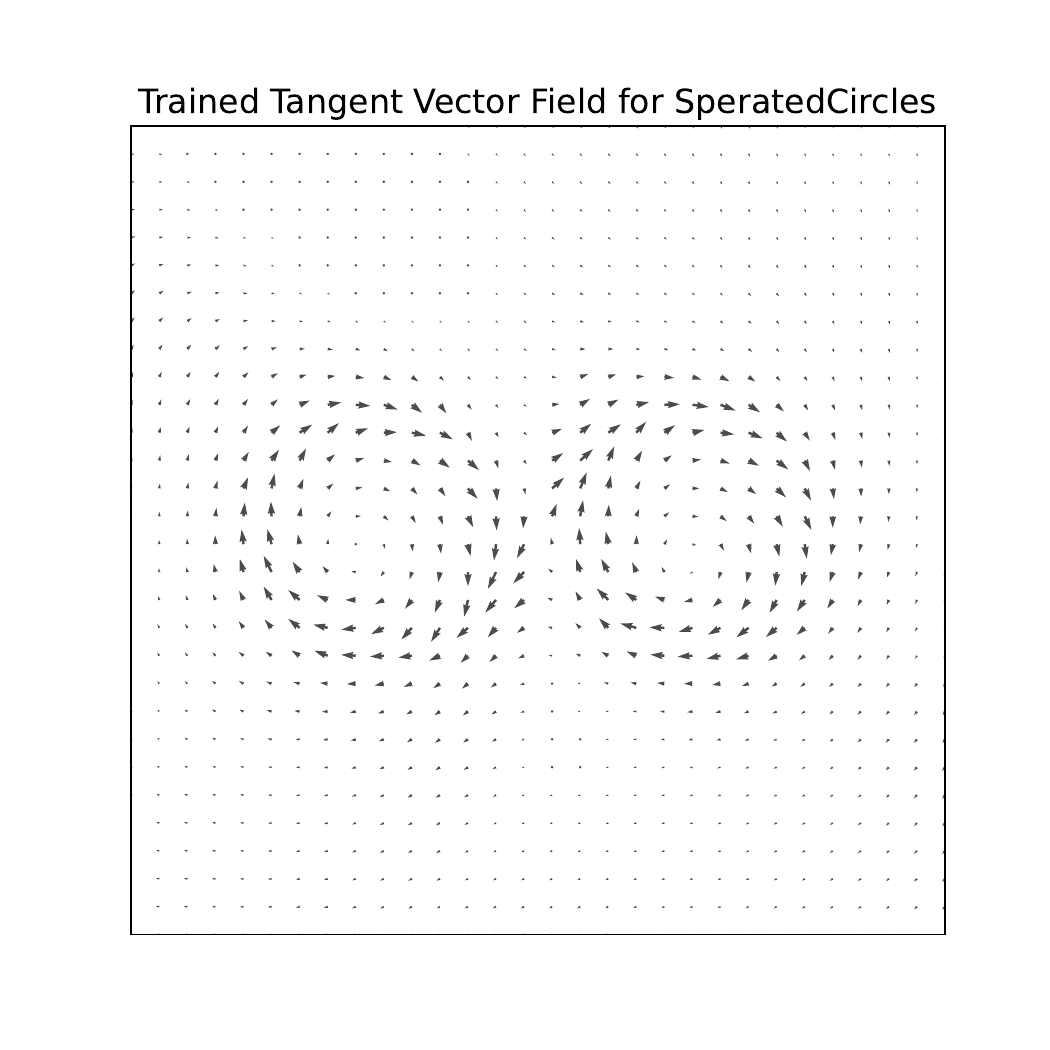}
        \caption{Learned tangent field for separately circles path}
        \label{fig:stangent}
    \end{subfigure}
    \hfill
    \begin{subfigure}[b]{0.24\linewidth}
        \centering
        \includegraphics[width=\linewidth]{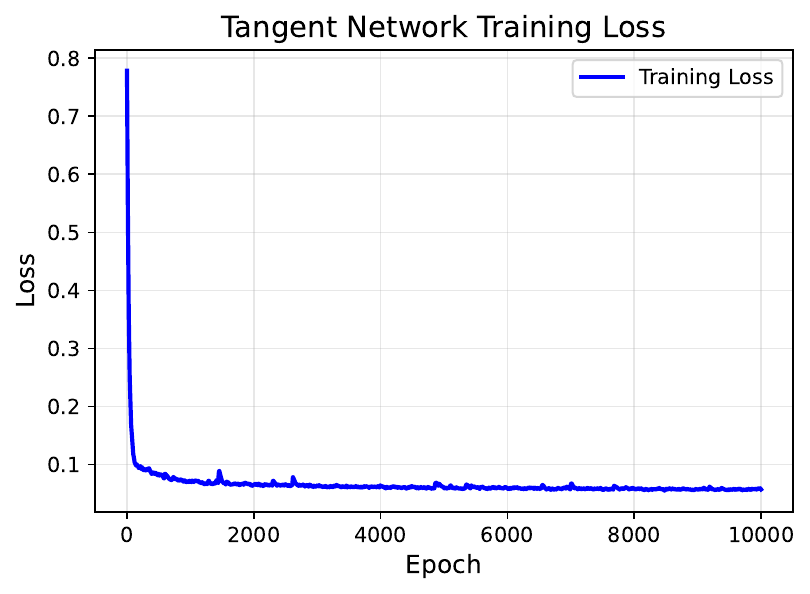}
        \caption{Tangent network training loss for separately circles path}
        \label{fig:sloss}
    \end{subfigure}
    \hfill
    \begin{subfigure}[b]{0.24\linewidth}
        \centering
        \includegraphics[width=\linewidth]{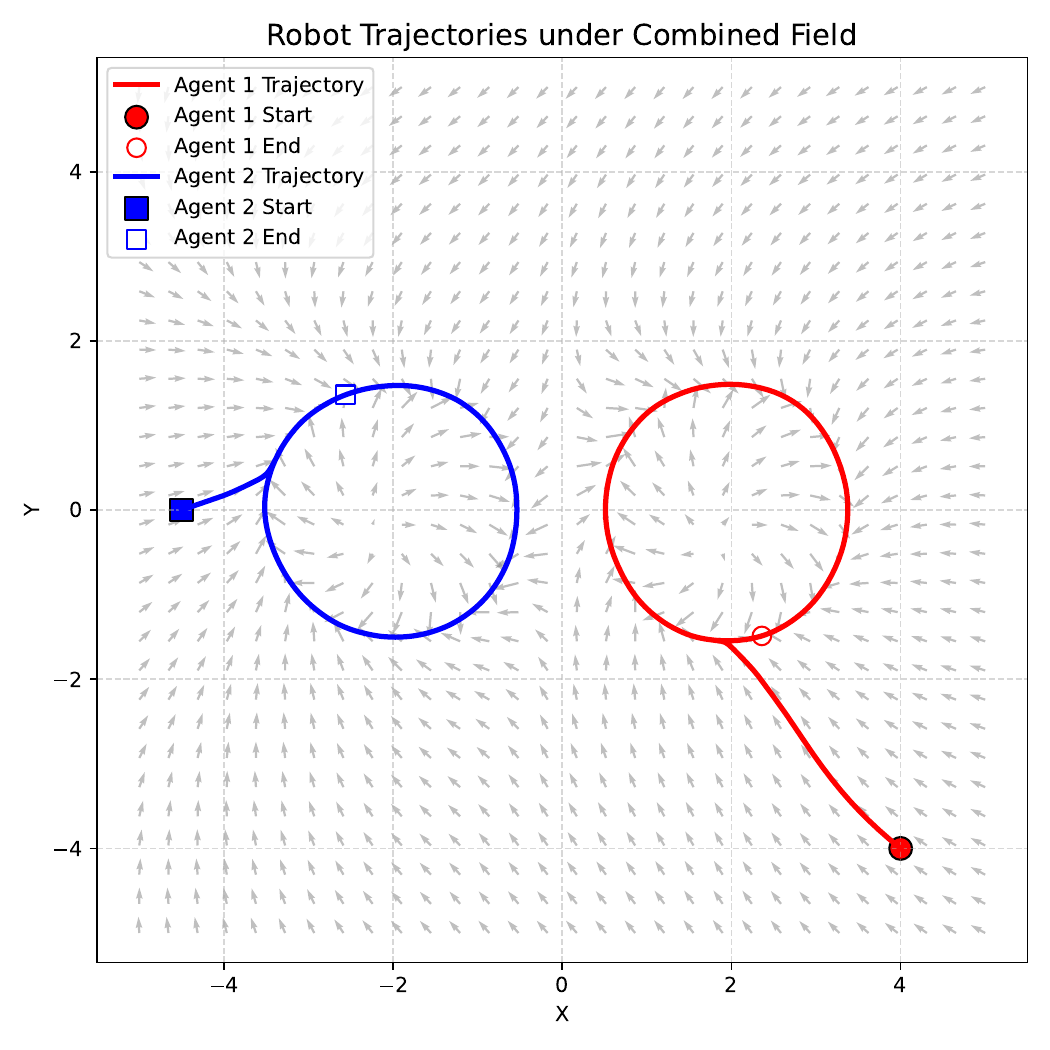}
        \caption{Path-following trajectories for separately circles path}
        \label{fig:spathfollowing}
    \end{subfigure}

    \caption{Visualization of the proposed SGVF framework on two waypoint-based path scenarios. 
    The top row corresponds to the \emph{concentric-circles} setup, showing the learned score field, tangent field, tangent network training loss, and the resulting path-following trajectories. 
    The bottom row corresponds to the \emph{separated-circles} setup, presenting the same components. }
    \label{fig:sgvf_overview}
\end{figure*}

\begin{proof}
(i) By Lemma~\ref{lem:decrease}, $\dot V\le 0$, so $V$ is Lyapunov.
LaSalle's invariance principle together with (A3) implies convergence to the largest invariant subset of $\mathcal{E}=\{\mathbf{s}_\theta=0\}$.

(ii) For any $x$ with fixed $V(x)=c$, the normal to the level set is $\nabla V(x)=-\mathbf{s}_\theta(x,1)$.
Orthogonality (A2.i) gives $\nabla V(x)^\top \mathbf{v}_\phi(x)=0$, i.e., $v$ is tangent to the level set, so $d(V\circ \gamma)/dt=0$ for trajectories $\gamma$ evolving under $\dot x=\mathbf{v}_\phi(x)$.
Since $\mathbf{m}=\mathbf{s}_\theta+\mathbf{v}_\phi$ and $\nabla V^\top \mathbf{s}_\theta=-\|\mathbf{s}_\theta\|^2$, the only contribution to $\dot V$ off the set $\{\mathbf{s}_\theta=0\}$ is strictly negative, while on $\{\mathbf{s}_\theta=0\}$ the contribution vanishes and $V$ is preserved.

(iii) On $\mathcal{M}$ we have $\mathbf{s}_\theta=0$ by first-order optimality at global modes of $P$.
Thus $\mathbf{m}=\mathbf{v}_\theta$ on $\mathcal{M}$.
Assumption (A2.ii) enforces $\|m\|=1$, hence the speed is nonzero and the dynamics is not stationary on $\mathcal{M}$, while by (ii) the motion remains on $V^{-1}(0)$.
\end{proof}

Robustness to soft constraints is important in practice, since orthogonality and unit-length are enforced via loss terms and therefore hold only approximately. Specifically, if the cosine error satisfies $|\mathbf{s}_\theta(x,1)^\top \mathbf{v}_\phi(x)| \le \varepsilon\,\|\mathbf{s}_\theta(x,1)\|\,\|\mathbf{v}_\phi(x)\|$ for some $\varepsilon\in[0,1)$, then
\[
\begin{aligned}
    \dot V(x) &= -\|\mathbf{s}_\theta(x,1)\|^2 - \mathbf{s}_\theta(x,1)^\top \mathbf{v}_\phi(x) \\
    &\le -\big(1-\varepsilon\|\mathbf{v}_\phi(x)\|\big)\,\|\mathbf{s}_\theta(x,1)\|^2.
\end{aligned}
\]
Consequently, for bounded $\|\mathbf{v}_\phi\|$ and sufficiently small $\varepsilon$, we have $\dot V<0$ whenever $\mathbf{s}_\theta \neq 0$, implying that the convergence to level sets and their invariance remain valid up to a neighborhood whose size is determined by the training accuracy.

\begin{figure*}[htbp]
    \centering
    \begin{subfigure}[b]{0.245\textwidth}
        \centering
        \includegraphics[width=\linewidth, angle=-90]{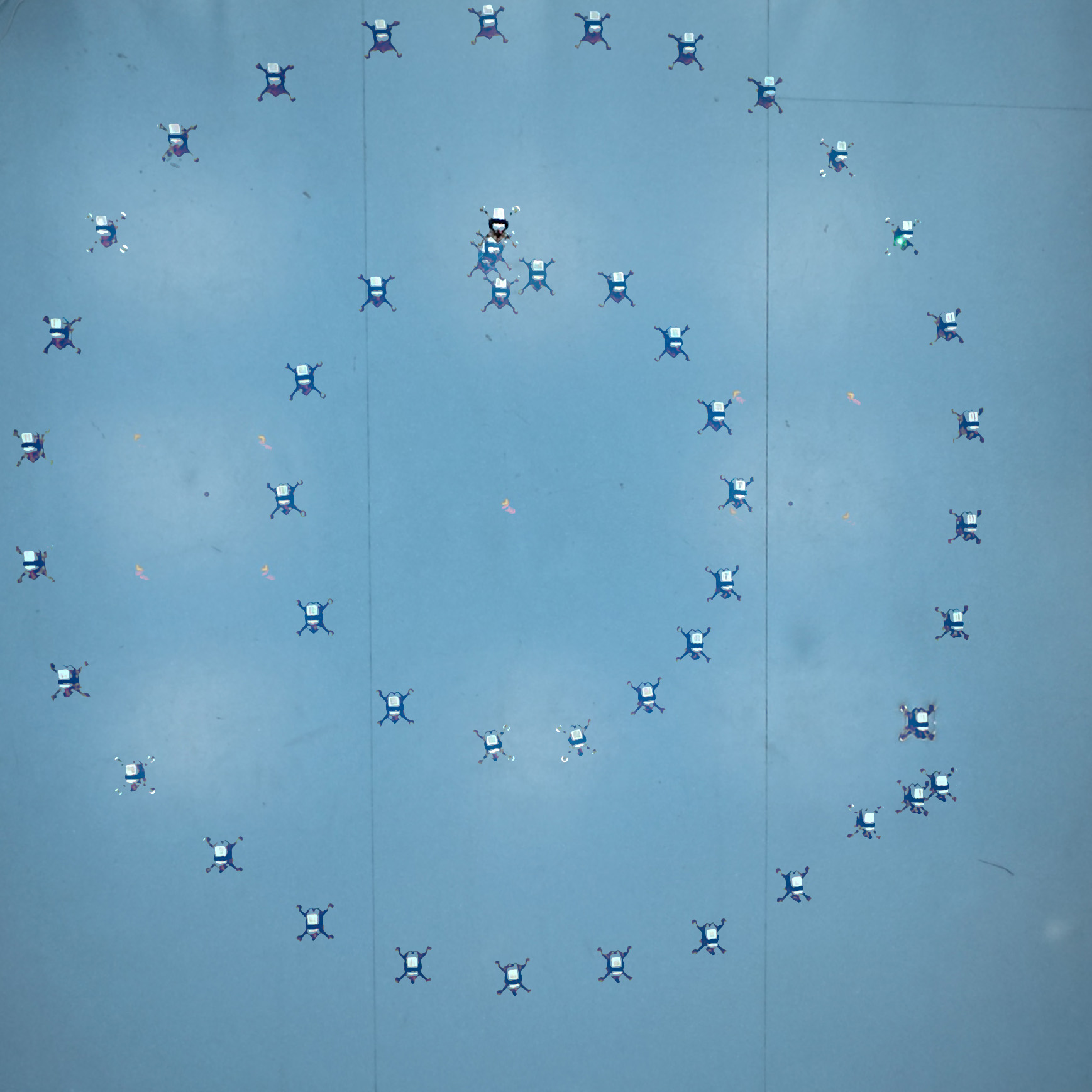}
        \caption{Overlayed images of the concentric circles experiment}
        \label{fig:ex1figure}
    \end{subfigure}
    \hfill
    \begin{subfigure}[b]{0.27\textwidth}
        \centering
        \includegraphics[width=\linewidth]{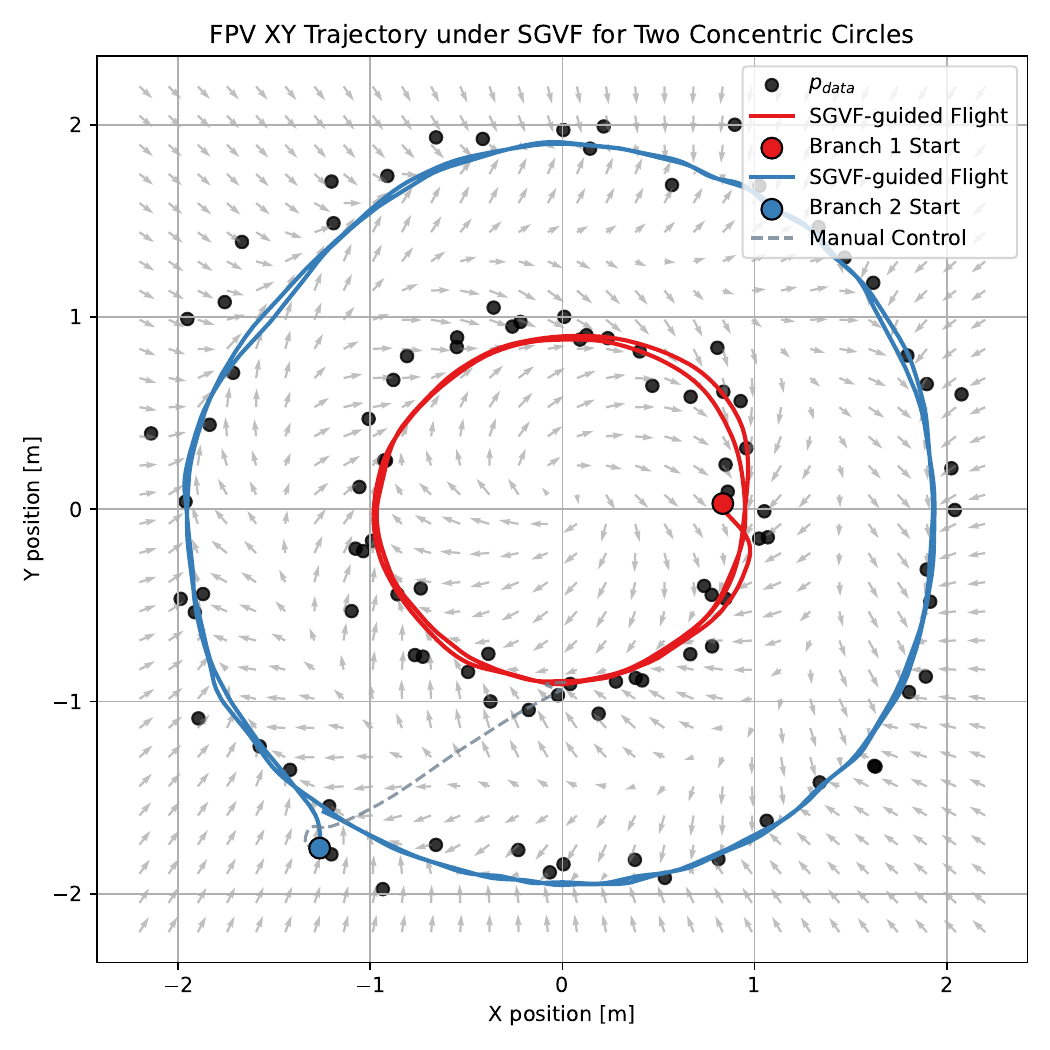}
        \caption{Trajectory plot with SGVF of concentric circles}
        \label{fig:ex1traj}
    \end{subfigure}
    \hfill
    \begin{subfigure}[b]{0.46\textwidth}
        \centering
        \includegraphics[width=\linewidth]{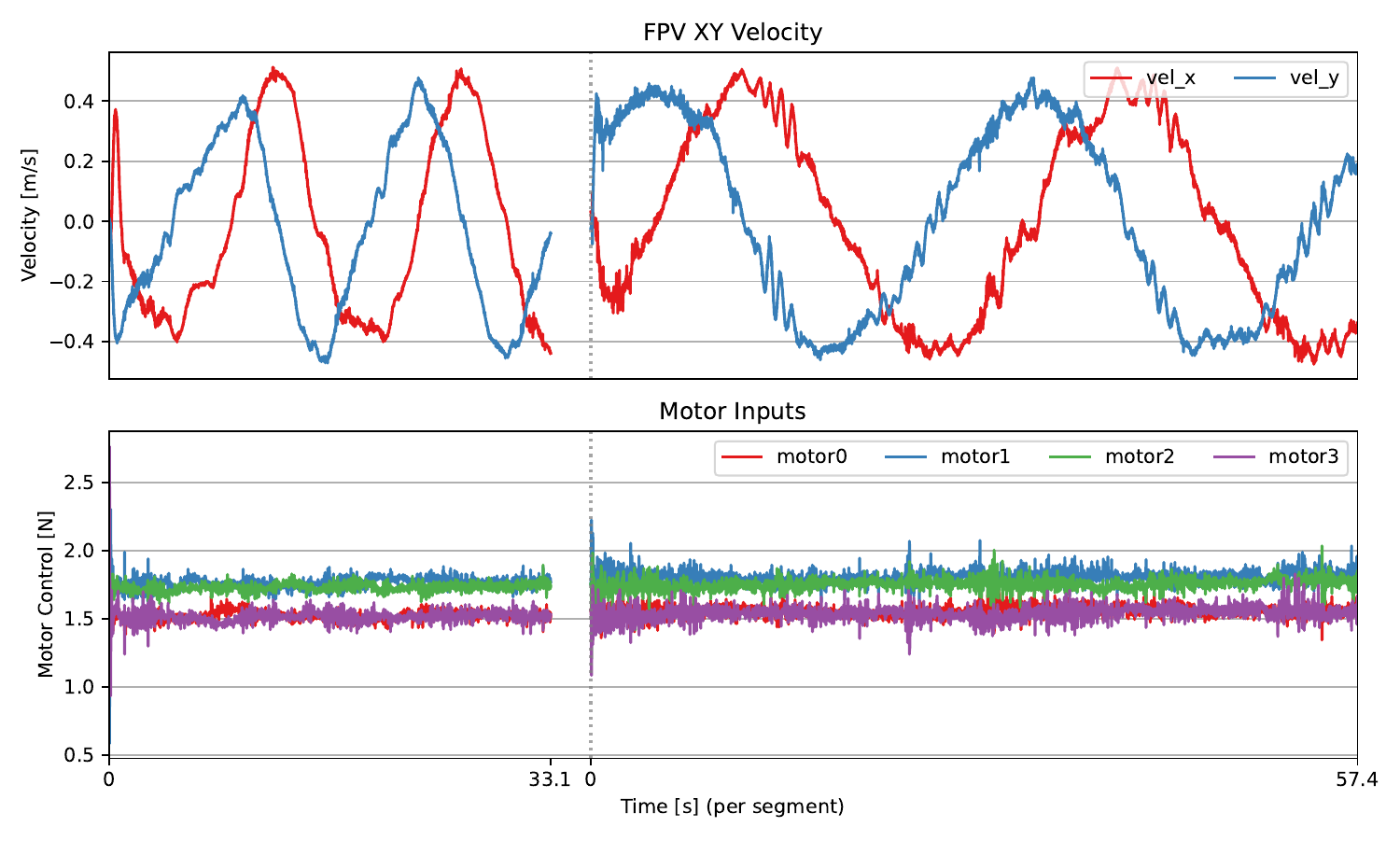}
        \caption{Velocity and control input plots in concentric circles experiment}
        \label{fig:ex1control}
    \end{subfigure}
    
    \vspace{0.3cm}

    \begin{subfigure}[b]{0.245\textwidth}
        \centering
        \includegraphics[width=\linewidth]{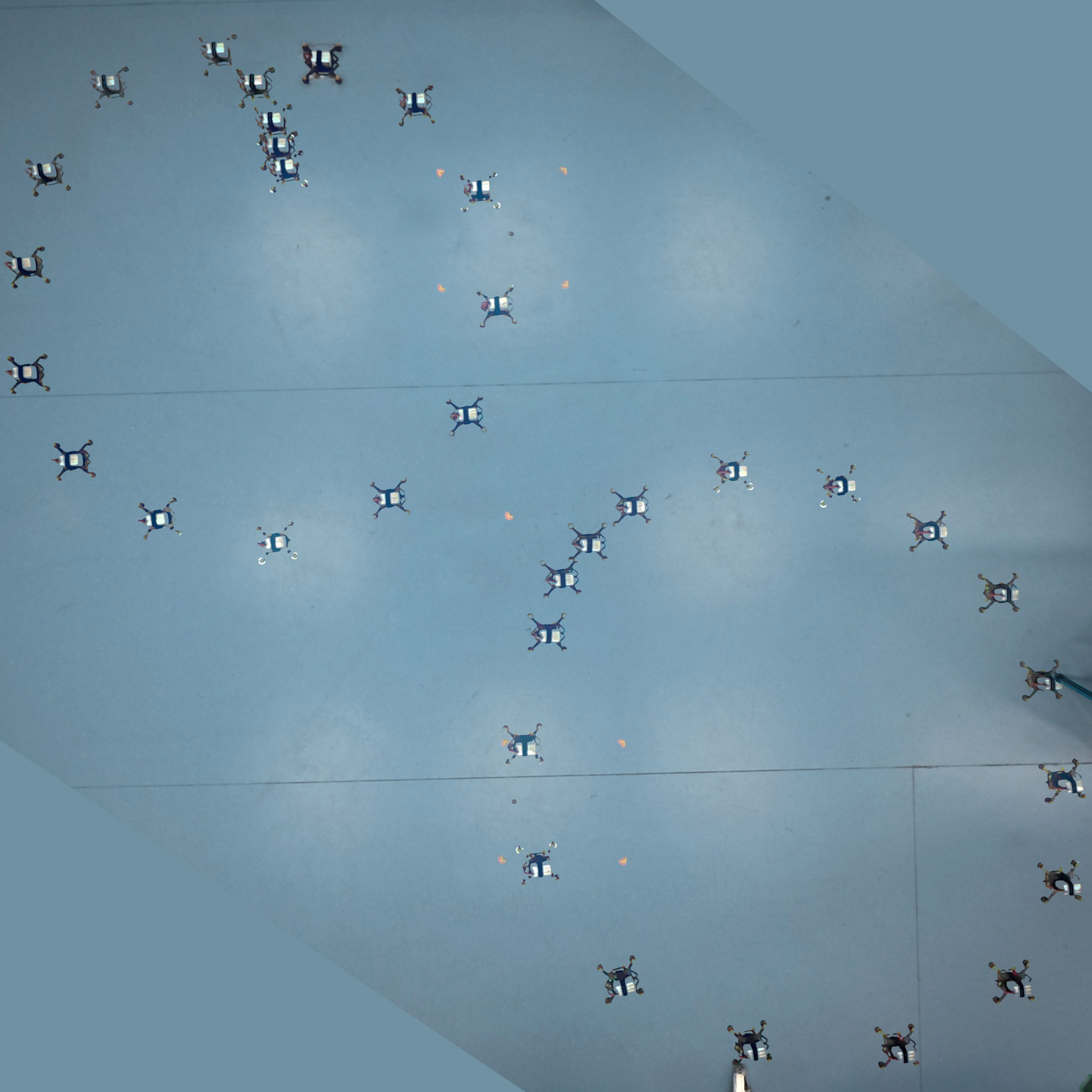}
        \caption{Overlayed images of the separated circles experiment}
        \label{fig:ex2figure}
    \end{subfigure}
    \hfill
    \begin{subfigure}[b]{0.27\textwidth}
        \centering
        \includegraphics[width=\linewidth]{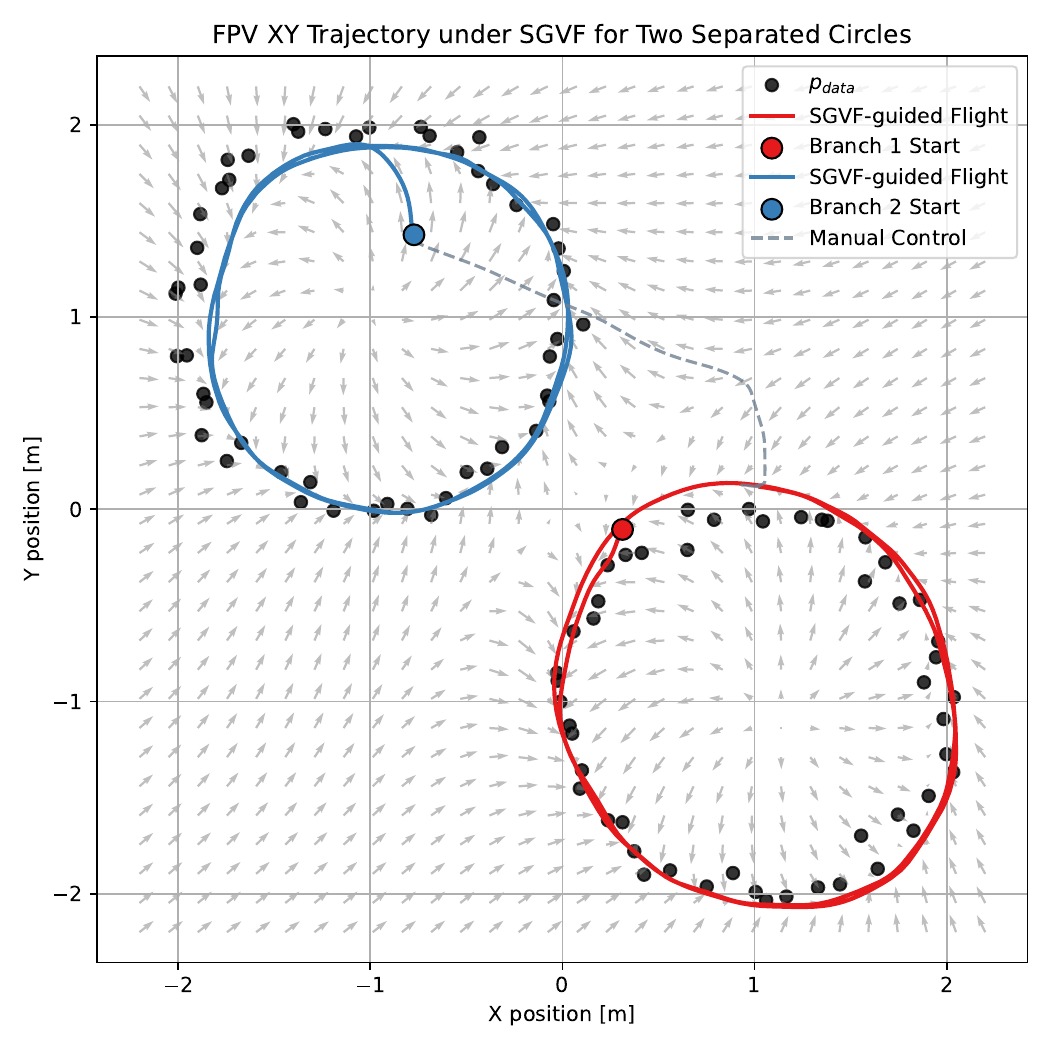}
        \caption{Trajectory plot with SGVF of separated circles}
        \label{fig:ex2traj}
    \end{subfigure}
    \hfill
    \begin{subfigure}[b]{0.46\textwidth}
        \centering
        \includegraphics[width=\linewidth]{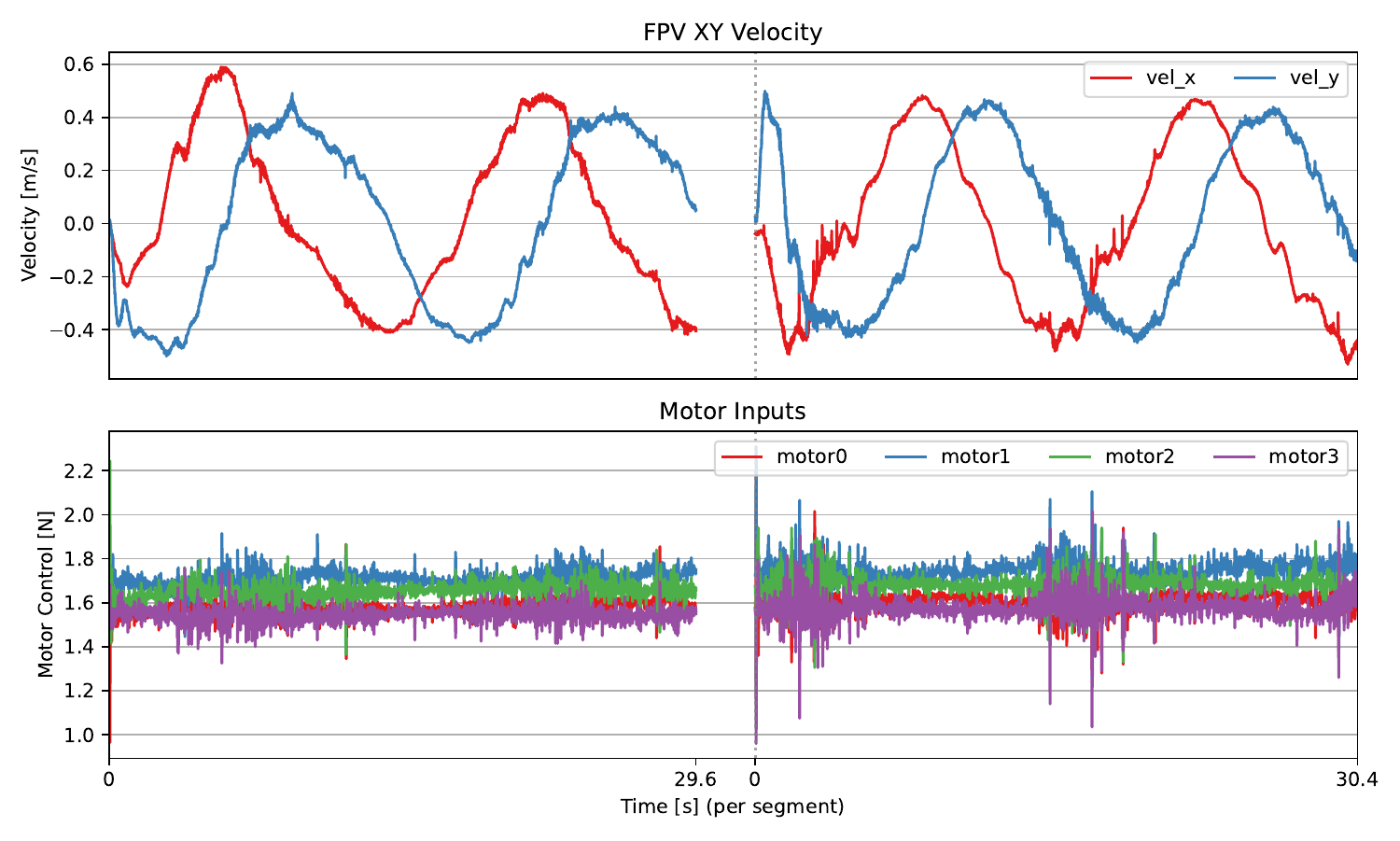}
        \caption{Velocity and control input plots in separated circles experiments}
        \label{fig:ex2control}
    \end{subfigure}
    
    \caption{Experimental results. The top row illustrates the \emph{concentric-circles} scenario, including the overlaid experiment video, the trajectory, and the velocity profile with motor inputs. The bottom row shows the \emph{separated-circles} scenario, presenting the same set of components.}
    \label{fig:experiment-results}
\end{figure*}

\section{Simulation and Experiment Results}
\subsection{Simulation Results}

To evaluate the effectiveness of the proposed SGVF, we conducted simulations on two representative waypoint-based planar path-following tasks: a \emph{double-circles} path and a \emph{separated-circles} path. As shown in Fig.~\ref{fig:sgvf_overview}, the learned score field captures the waypoint density distribution, while the tangent field provides directional guidance along the path. We employ $[3,64,64,64,64,2]$ and $[2,128,2]$ MLPs with SiLU activations for the score $\mathbf{S}_\theta(x,t)$ and tangent field $\mathbf{v}_\phi(x)$, respectively, with the parameter $k_s$ set to $0.2$. Both networks are trained using the Adam optimizer with a learning rate of $10^{-3}$. Training is performed for $10{,}000$ iterations with a batch size of $512$, and other hyperparameters are set as $k=5$ and $\lambda_1 = \lambda_2 = \lambda_3 = 1.0$. Model training and simulations were conducted using PyTorch on a Linux workstation equipped with an Intel i9-14900KF CPU and an NVIDIA RTX 4090 GPU. Two agents are used to follow the combined vector field from different starting points, testing the unified representation of the networks. 

The training losses of the tangent network demonstrate convergence and stability, indicating that the tangent component can be effectively learned on top of the score field. The resulting trajectories show that the agents successfully track complex waypoint-defined paths—including closed loops and disjoint branches—without requiring explicit parametric expressions of the path. These results validate the capability of SGVF to generalize across diverse geometrical structures and to integrate probabilistic attraction with directional guidance for robust waypoint-based navigation.

\subsection{Experiment Results}

For hardware validation, we evaluated the proposed method on two representative scenarios: concentric circles and separated circles, using a quadrotor UAV platform equipped with an RK3566 NPU for onboard computation. The MAV weighs 0.46 kg and has a motor-to-motor span of 0.149 m, with a thrust-to-weight ratio of 4.1. Low-level flight control was implemented via an $\mathrm{SE}(3)$-based controller. Onboard sensors sample voltage at 1000 Hz, while a Vicon motion capture system provides precise indoor localization by recording position, attitude, and velocity at 100 Hz in real time.

To evaluate the multi-branch performance, the UAV was manually switched to a new branch after completing the previous one. The experimental results are shown in Fig.~\ref{fig:experiment-results}. The outcome demonstrates that the SGVF successfully enables multi-branch path following. Under SGVF, the FPV flight velocity and motor inputs remain smooth and bounded, demonstrating the method’s good smoothness properties.

\subsection{Ablation Study}
\begin{figure}[htbp]
    \centering
    \begin{subfigure}[b]{0.32\linewidth}
        \centering
        \includegraphics[width=\linewidth]{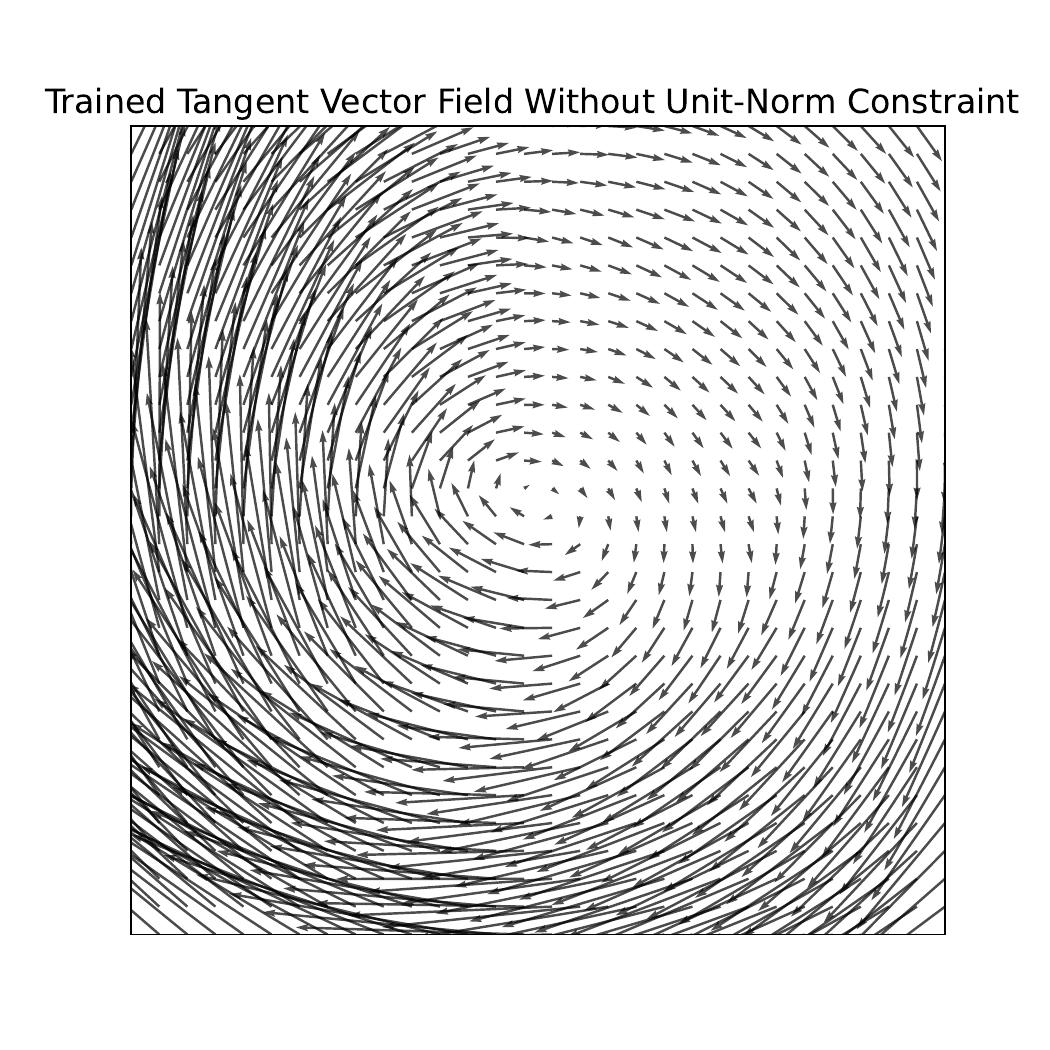}
        \caption{Remove unit-length constraint}
        \label{fig:Unit-Length Constraint}
    \end{subfigure}
    \hfill
    \begin{subfigure}[b]{0.32\linewidth}
        \centering
        \includegraphics[width=\linewidth]{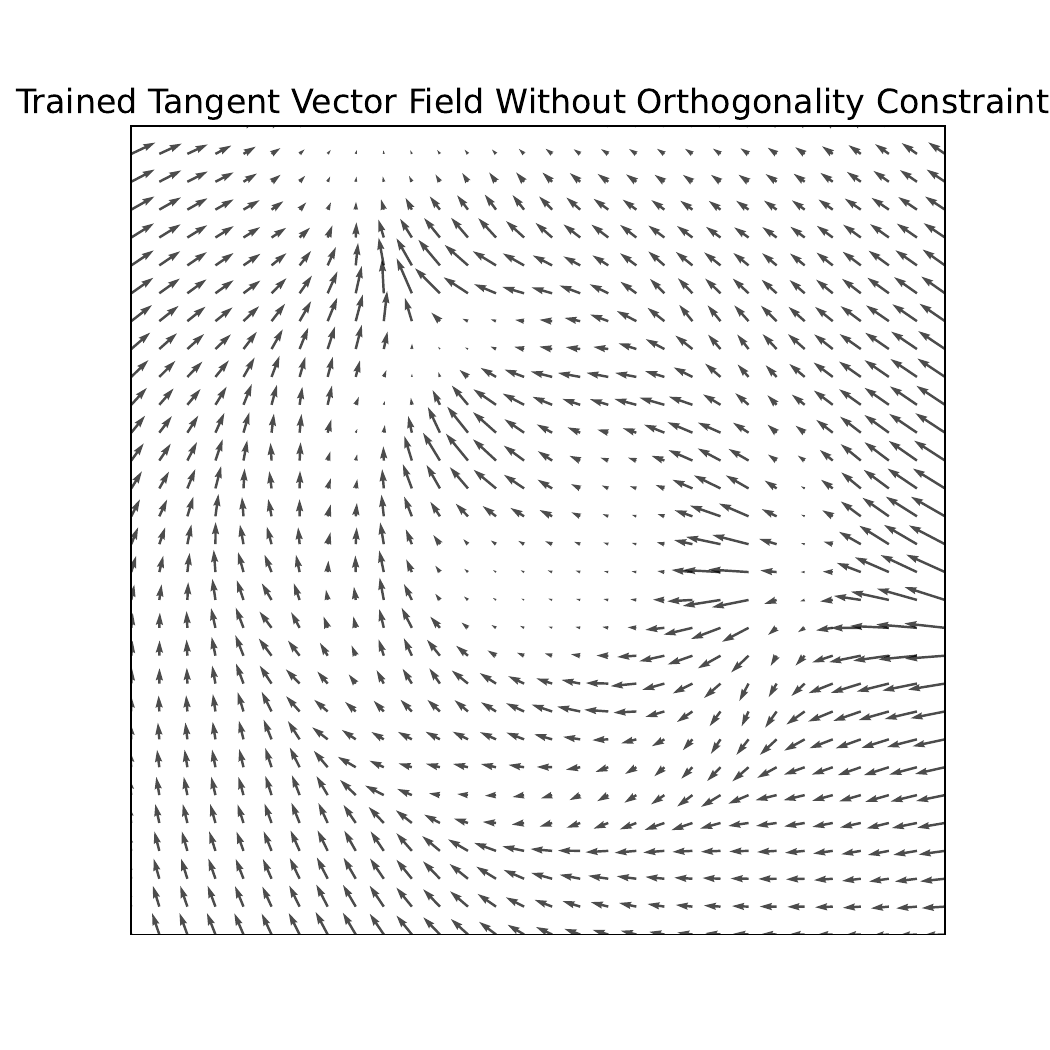}
        \caption{Remove orthogonal regularization}
        \label{fig:Orthogonal Regularization}
    \end{subfigure}
    \hfill
    \begin{subfigure}[b]{0.32\linewidth}
        \centering
        \includegraphics[width=\linewidth]{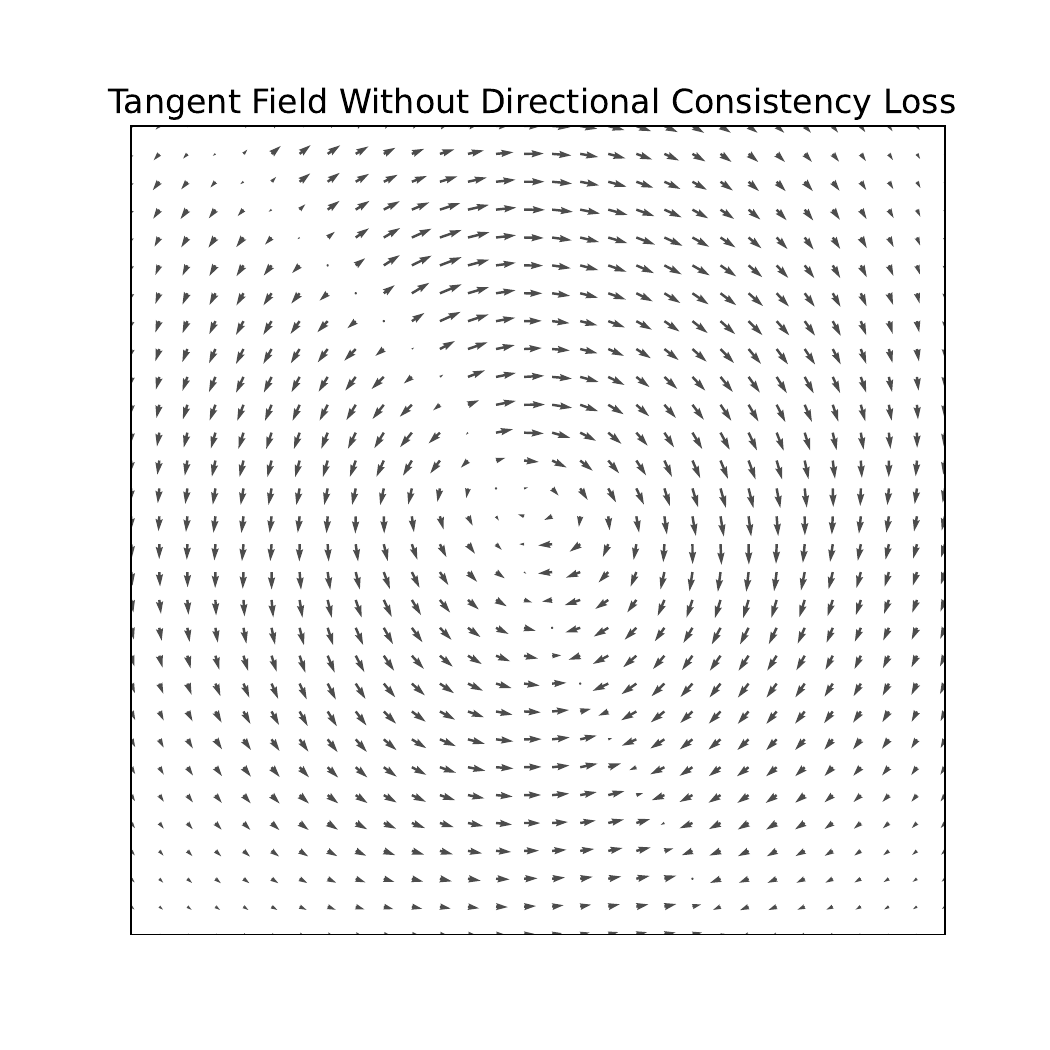}
        \caption{Remove directional constraint}
        \label{fig:Directional Consistency Constraint}
    \end{subfigure}
    \hfill
    \label{fig:ablationstudy}
    \caption{Ablation study on loss components}
\end{figure}

This section presents an ablation study on the different components of the loss function, using the concentric double-circle scenario (the first simulation case) to analyze the effect of removing each loss term. First, when the unit-length constraint is removed (see Fig. \ref{fig:Unit-Length Constraint}), the tangent vector field exhibits uncontrolled magnitudes, and the expected behavior—where the tangent dominates near the data points and diminishes away from them—is disrupted. Second, removing the orthogonality constraint leads to a tangent field that no longer follows the data points and instead tends to diverge away, indicating that the orthogonality term is crucial for properly combining the tangent with the score field (see Fig. \ref{fig:Orthogonal Regularization}). Finally, without the directional consistency constraint, the tangent field shows two conflicting directions, with one line clearly behaving as a source and another as a sink, demonstrating the role of directional consistency in maintaining field coherence and stability (see Fig. \ref{fig:Directional Consistency Constraint}). These ablation experiments provide an intuitive understanding of how each loss term contributes to training a reliable guiding vector field.

\section{Discussion}
\subsection{Score Vanishing and Singularities in Learned Vector Fields}
In score-based generative modeling, the learned score function $\nabla_x \log p_t(x)$ implicitly defines a guidance vector field that transports noise samples toward the data manifold. This field, though learned from data, behaves analogously to control-theoretic GVFs that are designed to steer agents toward predefined paths. Notably, in both settings, vector field singularities often arise—regions where the vector magnitude vanishes or becomes ill-defined (see Fig. \ref{fig:Learned Score Field}).
\begin{figure}[htbp]
    \centering
    \begin{subfigure}[b]{0.48\linewidth}
        \centering
        \includegraphics[width=\linewidth]{figure/doublecircles/score.png}
        \caption{Learned score field}
        \label{fig:Learned Score Field}
    \end{subfigure}
    \hfill
    \begin{subfigure}[b]{0.48\linewidth}
        \centering
        \includegraphics[width=\linewidth]{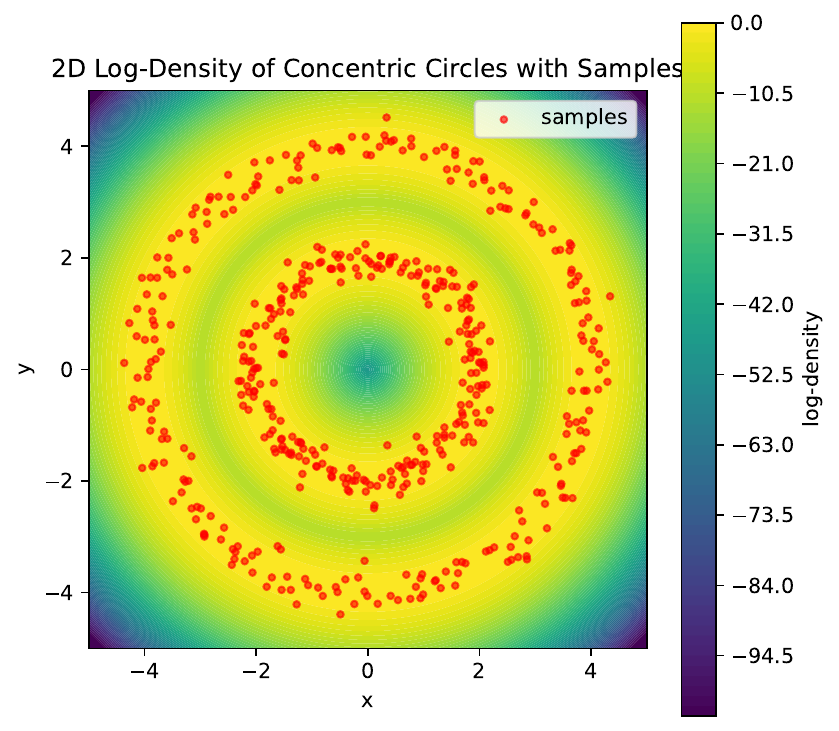}
        \caption{Log density in 2D}
        \label{fig:Log Density in 2D}
    \end{subfigure}
    \vspace{0.3cm}
    \begin{subfigure}[b]{0.48\linewidth}
        \centering
        \includegraphics[width=\linewidth]{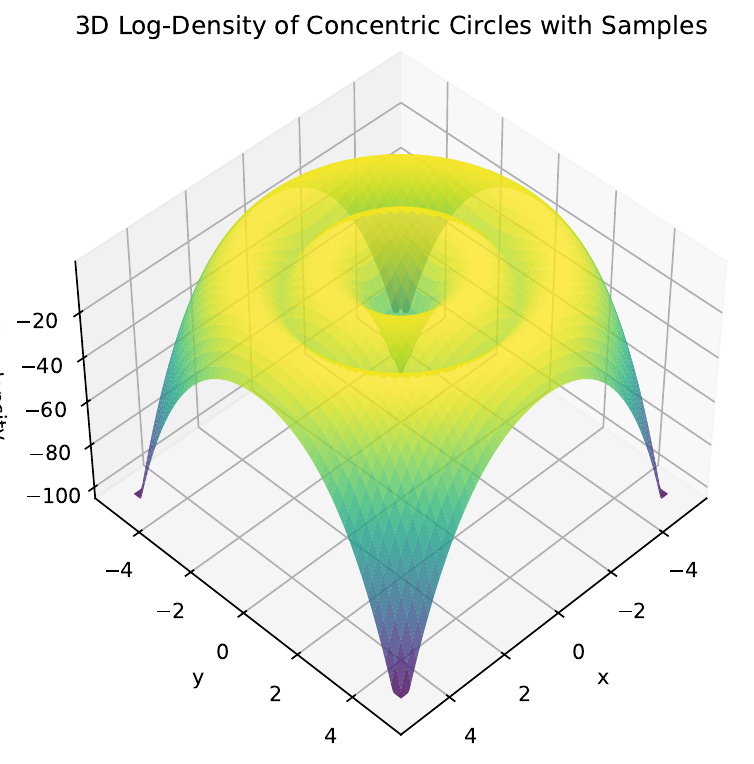}
        \caption{Log density in 3D}
        \label{fig:Log density in 3D}
    \end{subfigure}
    \hfill
    \begin{subfigure}[b]{0.48\linewidth}
        \centering
        \includegraphics[width=\linewidth]{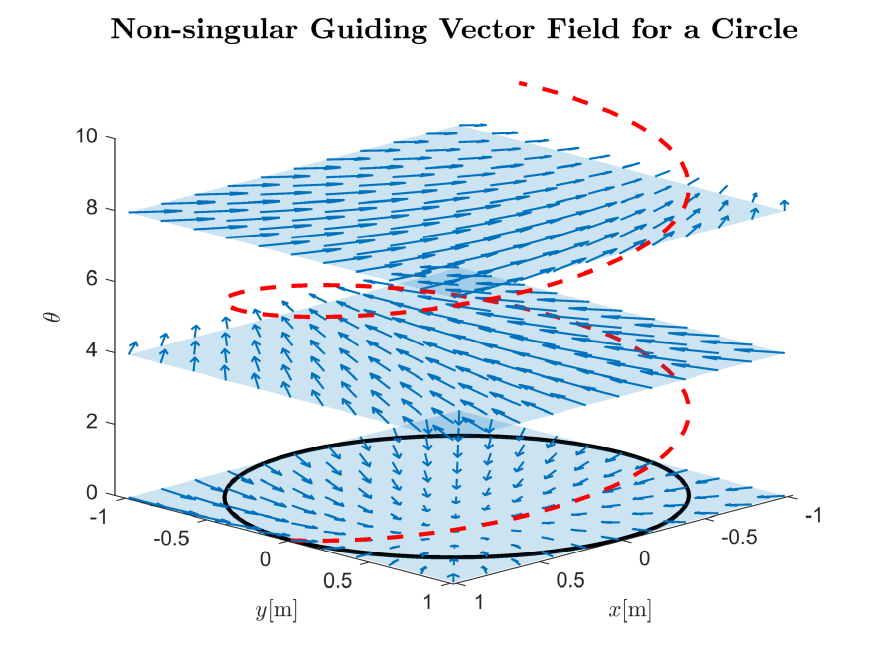}
        \caption{Non-singular guiding vector field \cite{chen2025non}}
        \label{fig:Non-singular guiding vector field}
    \end{subfigure}
    \label{fig:vanishing}
    \caption{Score vanishing and singularities in guiding vector field}
\end{figure}

When the target distribution is supported on a lower-dimensional manifold—particularly when represented as a discrete point cloud rather than an analytic form—the learned score field is forced to resolve topological inconsistencies in the data geometry. These inconsistencies are akin to the unavoidable singularities in GVF-based path-following control, which are constrained by the Poincaré-Hopf theorem. The theorem implies that for vector fields on manifolds, the total index of isolated singularities is a topological invariant (e.g., the Euler characteristic of the underlying manifold).

We propose that score vanishing—the phenomenon where the score norm approaches zero in localized regions—is not merely a training pathology, but a degenerate form of a topological singularity. It reflects the model's attempt to reconcile vector continuity and divergence constraints with the geometric structure of the data. These degeneracies are particularly pronounced near areas of low sampling density, high curvature, or nontrivial topology (e.g., loops, branches, or holes in the data support, see Fig. \ref{fig:Log Density in 2D},\ref{fig:Log density in 3D}).

This perspective offers a new geometric interpretation of score model failure modes and motivates the development of topology-aware regularization mechanisms. By drawing from the theory of vector field design in control systems—where singularity avoidance and minimal-index fields are active research topics (see Fig.\ref{fig:Non-singular guiding vector field})—we may be able to suppress undesirable degeneracies in learned score fields and improve training stability and expressivity.

\subsection{Conner Performance}
\begin{figure*}[htbp]
    \centering
    \begin{subfigure}[b]{0.32\linewidth}
        \centering
        \includegraphics[width=\linewidth]{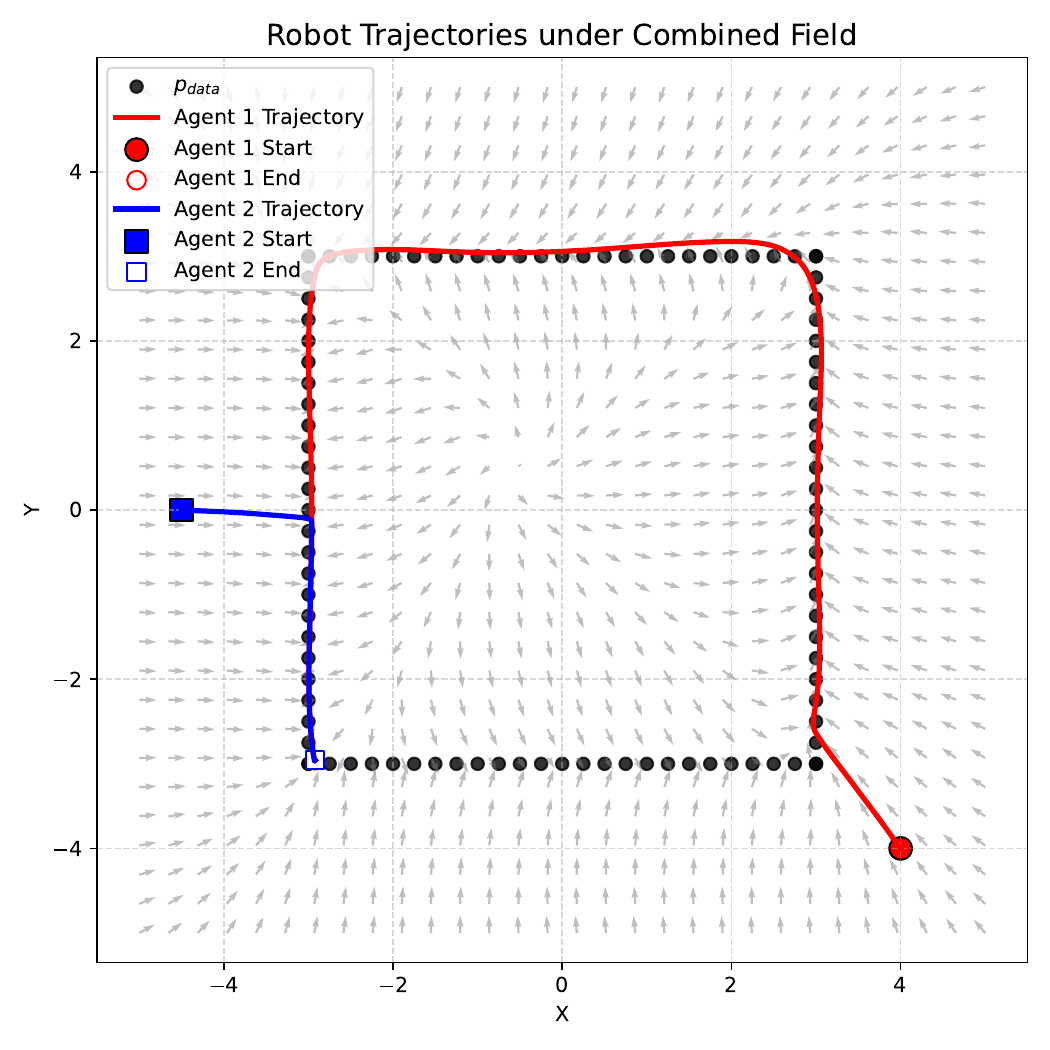}
        \caption{Square path, shallow network}
        \label{fig:square path}
    \end{subfigure}
    \hfill
    \begin{subfigure}[b]{0.32\linewidth}
        \centering
        \includegraphics[width=\linewidth]{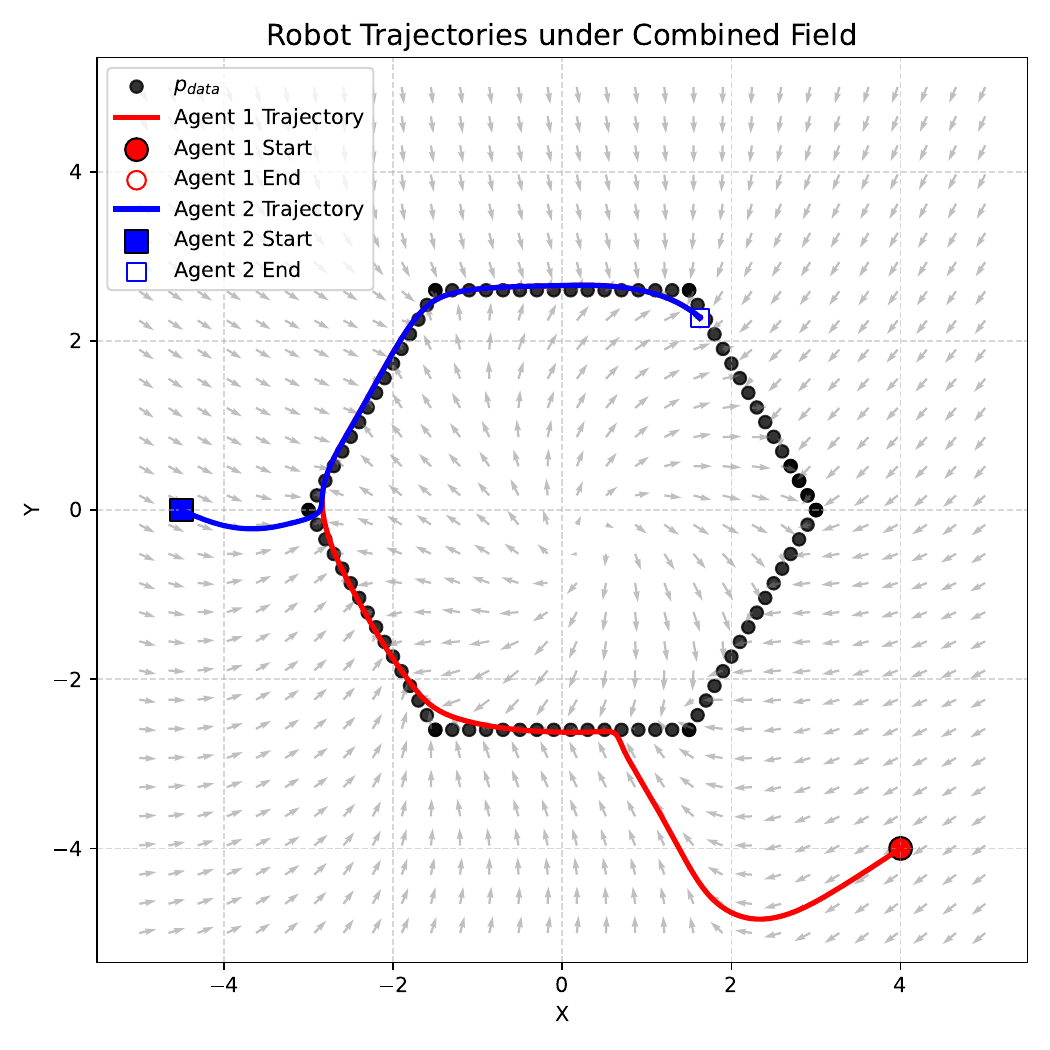}
        \caption{Hexagon path, shallow network}
        \label{fig:hexagon path, S}
    \end{subfigure}
    \hfill
    \begin{subfigure}[b]{0.32\linewidth}
        \centering
        \includegraphics[width=\linewidth]{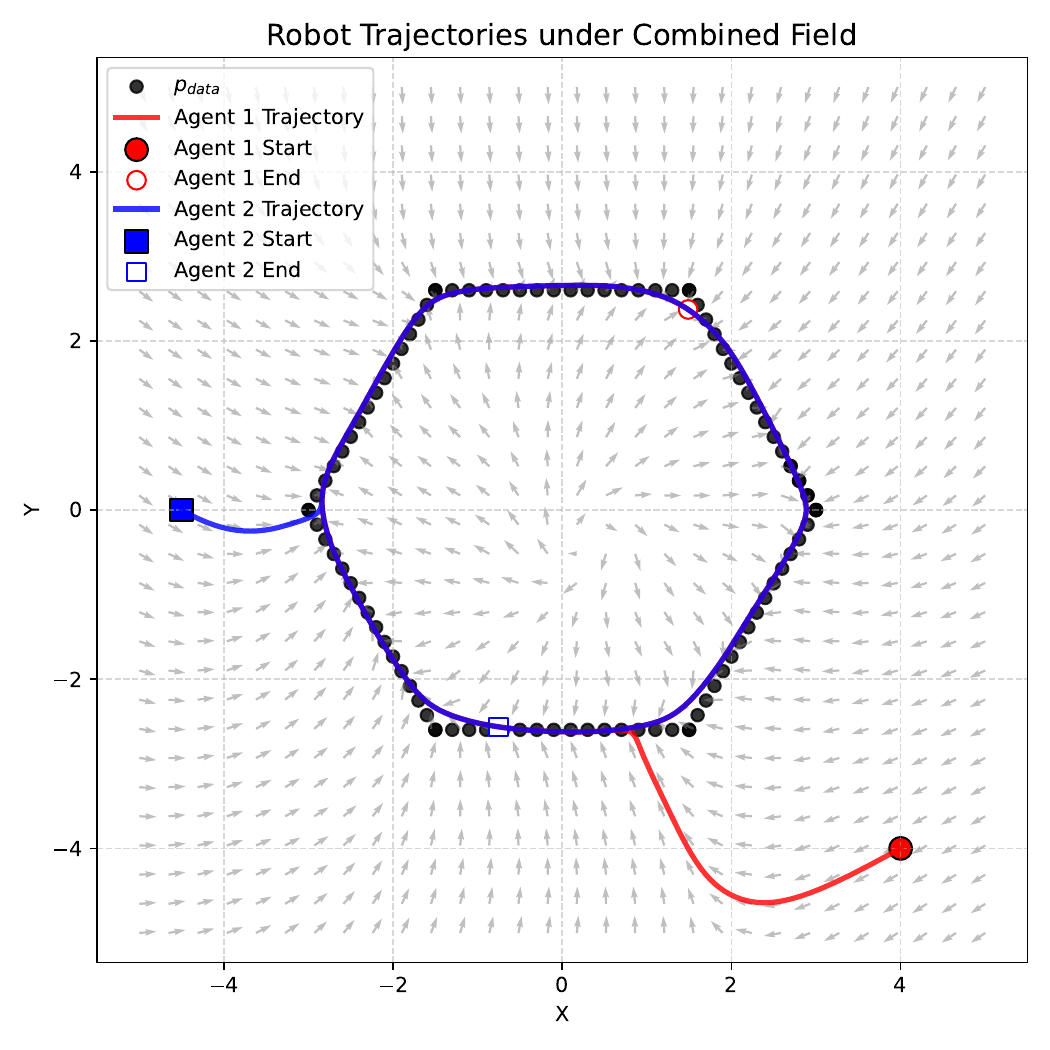}
        \caption{Hexagon path, deeper network}
        \label{fig:hexagon, D}
    \end{subfigure}
    \hfill
    \label{fig:corner}
    \caption{Path-following performance at polygon corners}
\end{figure*}
In this study, the guiding vector field is constructed using a score-guided soft regularization approach, which avoids imposing hard physical constraints on the learned field. This design offers several advantages, including faster convergence during training, a smoother optimization landscape, and effective guidance of an agent along waypoint-defined paths. However, the soft-guided formulation has limitations in handling regions with sharp geometric features, such as corners. Specifically, for polygonal paths with acute or moderately large angles, a shallow tangent network may generate local attractors near the corners, temporarily trapping the agent and causing deviations or pauses in its trajectory. Increasing the network depth effectively mitigates this issue, eliminating corner attraction and improving path-following performance, underscoring the influence of network capacity in representing complex geometric structures.

We investigate this behavior using two representative polygonal paths, including square and hexagonal shapes. For the square path (see Fig.\ref{fig:square path}), each corner forms a 90° angle, and simulations with a $[2,128,2]$ tangent network show that the agent occasionally stalls near these vertices due to local attractors in the tangent field. The hexagonal path, with 120° corners, exhibits similar behavior (see Fig.\ref{fig:hexagon path, S}) when the tangent network is shallow, though the trapping effect is less pronounced than in the square path. In contrast, a deeper $[2,64,64,64,64,2]$ network eliminates corner attraction entirely (see Fig.~\ref{fig:hexagon, D}). These observations highlight a trade-off inherent in the score-guided soft regularization approach: while it efficiently captures general path-following behavior and provides smooth guidance along gentle curves, its capacity to represent sharp corners or tightly curved segments is limited, suggesting the potential need for more expressive networks when handling high-curvature paths.

\section{Conclusion and Future Work}

In this work, we introduced SGVF, a unified framework that leverages score-based generative modeling to construct vector fields over unordered, multi-branch, and probabilistic paths. We demonstrated that SGVF can recover the tangent bundle of a manifold from discrete point clouds and provide robust guidance in scenarios where classical GVFs fail. While the current implementation relies on a basic score-based diffusion model with limited expressive capacity, recent advances such as the mean-flow model \cite{Geng_2025_MeanFlow_arxiv} suggest promising directions for improving efficiency, training stability, and fine-structure representation. More broadly, SGVF bridges generative modeling and geometric control, offering a foundation for future exploration of distribution-aware guidance in robotics. As part of ongoing research, we plan to extend these concepts to collective behavior identification and control, integrating learned interaction structures to guide multi-agent systems within a unified framework.

\bibliographystyle{IEEEtran}
\bibliography{sn-bibliography}

\end{document}